\documentclass{article}

\usepackage{microtype}
\usepackage{graphicx}
\usepackage{booktabs} 

\usepackage{hyperref}

\usepackage[preprint]{icml2026}

\usepackage{amsmath}
\usepackage{amssymb}
\usepackage{mathtools}
\usepackage{amsthm}

\usepackage[capitalize,noabbrev]{cleveref}

\theoremstyle{plain}
\newtheorem{theorem}{Theorem}[section]

\newtheorem{lemma}[theorem]{Lemma}
\newtheorem{corollary}[theorem]{Corollary}
\theoremstyle{definition}
\newtheorem{definition}[theorem]{Definition}

\theoremstyle{remark}
\newtheorem{remark}[theorem]{Remark}

\usepackage[textsize=tiny]{todonotes}

\icmltitlerunning{Asymptotically Fast Clebsch-Gordan Tensor Products with Vector Spherical Harmonics}

\usepackage{amsmath,amsfonts,bm}

\def\eqref#1{equation~\ref{#1}}

\def\1{\bm{1}}

\DeclareMathAlphabet{\mathsfit}{\encodingdefault}{\sfdefault}{m}{sl}
\SetMathAlphabet{\mathsfit}{bold}{\encodingdefault}{\sfdefault}{bx}{n}

\newcommand{\R}{\mathbb{R}}

\newcommand{\x}
{\mathbf{x}}
\renewcommand{\O}{\mathcal{O}}
\newcommand{\y}
{\mathbf{y}}
\newcommand{\tosphere}
{\mathrm{ToSphere}}
\newcommand{\fromsphere}
{\mathrm{FromSphere}}

\newcommand{\xl}[1]{\mathbf{x}^{(\ell_{#1})}}

\newcommand{\xjl}[1]{\mathbf{x}^{(j_{#1},\ell_{#1})}}

\newcommand{\zjl}[1]{\mathbf{z}^{(j_{#1},\ell_{#1})}}

\newcommand{\CC}{\mathbb{C}}

\newcommand{\TT}{\mathbb{T}}

\newcommand{\Hom}{\mathrm{Hom}}
\newcommand{\f}{\mathbf{f}}
\newcommand{\Y}{\mathbf{Y}}
\newcommand{\hatr}{\hat{\mathbf{r}}}

\newcommand{\g}{\mathbf{g}}
\definecolor{highlight}{HTML}{0A6847}
\definecolor{warning}{HTML}{CC3300}

\usepackage{algorithm}
\usepackage[noend]{algpseudocode}
\usepackage{braket}
\usepackage{subfig}
\usepackage{array}
\usepackage{caption}
\usepackage{titletoc}
\usepackage{enumitem}
\usepackage{url}
\usepackage{tikz}
\usepackage{braket}

\usetikzlibrary{decorations.markings, arrows.meta, bending}

\tikzset{
  arrow1/.style={
    postaction={
      decorate,
      decoration={
        markings,
        mark=at position 0.6 with {\arrow{Stealth}}
      }
    }
  },
  arrow1f/.style={
    postaction={
      decorate,
      decoration={
        markings,
        mark=at position 0.6 with {\arrow{Stealth}},
        mark=at position 1 with {\fill (0,0) circle (1.5pt);}
      }
    }
  },
  arrow1b/.style={
    postaction={
      decorate,
      decoration={
        markings,
        mark=at position 0.6 with {\arrow{Stealth}},
        mark=at position 0 with {\fill (0,0) circle (1.5pt);}
      }
    }
  },
  arrow2/.style={
    postaction={
      decorate,
      decoration={
        markings,
        mark=at position 0.55 with {\arrow{Stealth}},
        mark=at position 0.65 with {\arrow{Stealth}}
      }
    }
  },
  arrow2f/.style={
    postaction={
      decorate,
      decoration={
        markings,
        mark=at position 0.55 with {\arrow{Stealth}},
        mark=at position 0.65 with {\arrow{Stealth}},
        mark=at position 1 with {\fill (0,0) circle (1.5pt);}
      }
    }
  },
  arrow2b/.style={
    postaction={
      decorate,
      decoration={
        markings,
        mark=at position 0.55 with {\arrow{Stealth}},
        mark=at position 0.65 with {\arrow{Stealth}},
        mark=at position 0 with {\fill (0,0) circle (1.5pt);}
      }
    }
  }
}

\renewcommand{\O}{\mathcal{O}}

\begin{document}
\twocolumn[
  \icmltitle{Asymptotically Fast Clebsch-Gordan Tensor Products with Vector Spherical Harmonics}



  \icmlsetsymbol{equal}{*}

  \begin{icmlauthorlist}
    \icmlauthor{YuQing Xie}{equal,miteecs}
    \icmlauthor{Ameya Daigavane}{miteecs}
    \icmlauthor{Mit Kotak}{mitcse}
    \icmlauthor{Tess Smidt}{miteecs}
  \end{icmlauthorlist}

  \icmlaffiliation{miteecs}{Department of EECS, Massachusetts Institute of Technology, Cambridge, MA 02139, USA}
  \icmlaffiliation{mitcse}{Department of CSE, Massachusetts Institute of Technology, Cambridge, MA 02139, USA}

  \icmlcorrespondingauthor{YuQing Xie}{xyuqing@mit.edu}
  \icmlcorrespondingauthor{Tess Smidt}{tsmidt@mit.edu}

  \icmlkeywords{Machine Learning, ICML}

  \vskip 0.3in
]



\printAffiliationsAndNotice{}  

\begin{abstract}
$E(3)$-equivariant neural networks have proven to be effective in a wide range of 3D modeling tasks. A fundamental operation of such networks is the tensor product, which allows interaction between different feature types. Because this operation scales poorly, there has been considerable work towards accelerating this interaction. However, recently \citet{xieprice} have pointed out that most speedups come from a reduction in expressivity rather than true algorithmic improvements on computing Clebsch-Gordan tensor products. A modification of Gaunt tensor product \citep{gaunt} can give a true asymptotic speedup but is incomplete and misses many interactions. In this work, we provide the first complete algorithm which truly provides asymptotic benefits Clebsch-Gordan tensor products. For full CGTP, our algorithm brings runtime complexity from the naive $O(L^6)$ to $O(L^4\log^2 L)$, close to the lower bound of $O(L^4)$. We first show how generalizing fast Fourier based convolution naturally leads to the previously proposed Gaunt tensor product \citep{gaunt}. To remedy antisymmetry issues, we generalize from scalar signals to irrep valued signals, giving us tensor spherical harmonics. We prove a generalized Gaunt formula for the tensor harmonics. Finally, we show that we only need up to vector valued signals to recover the missing interactions of Gaunt tensor product.

\end{abstract}

\section{Introduction}

Symmetries are present in many complex physical systems, and incorporating them in neural networks can significantly improve both learning efficiency and robustness \cite{nequip,Rackers2023-sb,Frey2023-gg,Owen2024-pn}. As a result, considerable effort has been dedicated to develope $E(3)$-equivariant neural networks (E(3)NNs) \citep{thomas2018tensorfieldnetworksrotation,weiler20183dsteerablecnnslearning,kondor2018nbodynetworkscovarianthierarchical,kondor2018clebschgordannetsfullyfourier}. E(3)NNs have delivered strong performance across a wide range of scientific applications, including molecular force fields \citep{nequip,allegro,mace}, catalyst discovery \citep{equiformer}, generative models \citep{edm}, charge density prediction \citep{fu2024recipe}, and protein structure prediction \citep{alphafold,equifold}.

The group $E(3)$ consists of all rotations, translations and reflections in 3 dimensions; we say a model is $E(3)$-equivariant if it satisfies:
\begin{align}
    \label{eqn:equiv}
    f(g \cdot x) = g \cdot f(x) \quad \forall g \in E(3), x \in X.
\end{align}
The way a feature transforms under a group is called a group representation. For most groups of interest, a representations can be broken into smaller pieces called irreducible representations (irreps). See \autoref{sec:irreps} for a more in depth discussion. Hence, irreps are well studied objects in group theory and play a prominent role in many equivariant networks.

It is useful to choose a basis where any representation is explicitly a direct sum of irreps. This is done in popular $E(3)$ equivariant frameworks \citep{geiger2022e3nn, unke2024e3x, cuequivariance}. Such a basis makes it easy to construct equivariant linear layers since Schur's Lemma tells us only irreps of the same type can mix. In order to interact irreps of different types, we can take a tensor product which is effectively an outer-product. However, the resulting representation is no longer explicitly a direct sum of irreps. The Clebsch-Gordan (CG) coefficients\footnote{The values of the Clebsch-Gordan coefficients are also basis dependent. In this paper we stick to quantum mechanical conventions. Most equivariant frameworks convert to a real basis, changing the standard Clebsch-Gordan coefficients.} \citep{varshalovich1988quantum} provide the change of basis to convert the resulting tensor product representation back into a direct sum of irreps. The combination of the outer product along with the CG change of basis is referred to as the Clebsch-Gordan tensor product (CGTP).

However, CGTP is slow, limiting our ability to scale up these models for larger systems. In the typical setting, it has been reported to have a time complexity of $\O(L^6)$ \citep{escn} though it can be reduced to $\O(L^5)$ by considering additional sparsity \cite{cobb2021efficient}. As a result, significant effort to speed up equivariant networks has focused on tensor products.

In molecular systems, one set of inputs for CGTP is often derived from spherical harmonics of relative difference vectors. In this special case, \citet{escn} showed a suitable rotation creates significant sparsity allowing for faster operations. Further, \citet{li2025e2former} reduces number of this type of tensor product needed from number of edges $\O(|\mathcal{E}|)$ to number of vertices $\O(|\mathcal{V}|)$, allowing for better scaling on dense graphs.

On the engineering side, there have been efforts to build specialized kernels to optimize performance on current hardware. These include openEquivariance, FlashTP, cuEquivariance, and B3 \citep{bharadwaj2025efficient,leeflashtp,cuequivariance,kotak2025simplifying} which work for general tensor products.

On the algorithmic side there have been a number of alternatives proposed. One line of work proposes working in a Cartesian basis instead of a strictly irrep (also called spherical) basis \citep{shao2024high,zaverkinhigher}. In this framework taking tensor products is cheap, but decomposing back into irreps, especially to construct equivariant linear layers, now becomes expensive. However for CGTP, while these methods may be faster for small $L$, they have poor asymptotic scaling.

There have also been alternative tensor product operations (TPOs) \citep{gaunt,unke2024e3x} with better asymptotic scaling proposed to replace CGTP. Recently, \citet{xieprice} pointed out that these alternatives are not truly tensor products and their speedups directly come from reduced expressivity. Using these to simulate full CGTP would lead to the same $\O(L^5)$ scaling. However, \citet{xieprice} noticed that there are asymptotically fast SH transform algorithms \citep{healy2003ffts} which allow the Gaunt tensor product (GTP) \citep{gaunt} to achieve additional asymptotic runtime gains. These additional gains are the only truly algorithmic ones, not a result of losing expressivity. Unfortunately, GTP is incomplete and misses interactions such as cross products and cannot fully simulate CGTP.

In this paper, we present the first TPO which truly provides asymptotic gains and is complete. It can be used to simulate full CGTP in $\O(L^4\log^2 L)$ time, close to the theoretical lower bound of $\O(L^4)$. We present our results in a way which is generalizable to tensor products on other compact Lie groups. In addition, we provide a generalized version of the Gaunt formula for tensor spherical harmonics, which may be useful in other scientific fields. Our main contributions are the following
\begin{itemize}
    \item First truly asymptotically faster and complete tensor product operation
    \item A generalized Gaunt formula for tensor spherical harmonics
    \item Explicit connection to group Fourier transforms, allowing generalization to other groups
    \item Drop-in replacement for Gaunt tensor product that is complete
\end{itemize}

We organize this paper as follows. In \autoref{sec:tp}, we briefly cover the framework for analyzing tensor product operations introduced in \citet{xieprice}. In \autoref{sec:generalize_FFT}, we draw an explicit connection to group Fourier transforms, showing that the previously proposed Gaunt tensor product \citep{gaunt} is a natural result of attempting to generalize the ideas of fast Fourier based convolutions. In \autoref{sec:tensor_harmonics}, we generalize from the usual scalar spherical harmonics to tensor spherical harmonics. We show how to use tensor spherical harmonics to produce a TPO and derive generalized Gaunt coefficients to analyze this operation. In \autoref{sec:VSH_all_you_need}, we use this generalized formula to prove that we only need up to vector signals to perform any tensor product interaction, calling it vector signal tensor product (VSTP). Finally, in \autoref{sec:asymptotics}, we summarize the asymptotic runtimes of our algorithm compared to other TPOs. 

\section{Analyzing tensor products}
\label{sec:tp}

A variety of tensor product operations (TPOs) have been proposed to accelerate the standard CGTP \cite{unke2024e3x, gaunt}. These can be defined as the following.
\begin{definition}[Tensor product operations]
    Let $X,Y,Z$ be vector spaces equipped with actions of $G$. We refer to any fixed equivariant bilinear $T:X\times Y\to Z$ as a tensor product operation.
\end{definition}
\citet{xieprice} then defined an measure of expressivity based for the TPOs. In fact, they show that all of the asymptotic speed gains of previously proposed TPOs are directly explained by the loss in expressivity. Recovering the lost expressivity would require multiple copies of these TPOs in parallel, resulting in asymptotic runtime equivalent to standard CGTP with all sparsity constraints.

However, it may be possible that certain CGTP interactions cannot be simulated no matter how many times we call a given TPO. For instance, GTP can never simulate cross products. \cite{xieprice} proposed a measure of interactibility to describe this issue. Their definition is equivalent to the following.
\begin{definition}[Interactability]
    Let $T:X\times Y\to Z$ be a tensor product operation. Let $(\ell_1,\ell_2,\ell_3)$ be a triple of irrep types and $V_{\ell_1},V_{\ell_2},V_{\ell_3}$ be the corresponding spaces. Denote by $\Hom_G(A,B)$ to be the set of equivariant linear maps from $A\to B$. If there exists a nonzero bilinearity $B:V_{\ell_1}\times V_{\ell_2}\to V_{\ell_3}$ of the form
    \[B(v_1,v_2)=L_Z(T(L_X(\mathbf{v}_1),L_Y(\mathbf{v}_1)))\]
    where $L_X\in \Hom_G(V_{\ell_1},X)$, $L_Y\in \Hom_G(V_{\ell_2},Y)$,$L_Z\in \Hom_G(Z,V_{\ell_3})$, then we say $(\ell_1,\ell_2,\ell_3)$ is interactable under $T$.
\end{definition}
Intuitively, this just says $(\ell_1,\ell_2,\ell_3)$ is interactable if we can use $T$ to form a $\ell_3$ type irrep from a pair of $\ell_1,\ell_2$ inputs. GTP only allows even interactions where $\ell_1+\ell_2+\ell_3$ is even. For example, cross product corresponds to $(1,1,1)$ which has odd sum and is not computable.

We can directly analyze interactability by using selection rules. Let the irreps in $X,Y,Z$ be labeled by the tuples $(\ell_X,c_X), (\ell_Y,c_Y)$ and $(\ell_Z,c_Z)$, where $\ell$ indicates the irrep type and $c$ is an index over its multiplicity. A \textbf{selection rule} for $T$ is a condition on the labels $(\ell_X,c_X), (\ell_Y,c_Y), (\ell_Z,c_Z)$ which must be satisfied for 
\begin{align*}
    T|_{(\ell_X,c_X), (\ell_Y,c_Y), (\ell_Z,c_Z)}:X^{(\ell_X,c_X)}\times Y^{(\ell_Y,c_Y)}\to Z^{(\ell_Z,c_Z)}
\end{align*} to be nonzero. If there is no choice of $(c_X,c_Y,c_Z)$ such that $(\ell_X,c_X), (\ell_Y,c_Y)$ and $(\ell_Z,c_Z)$ satisfy the selection rules for a given $T$, then $(\ell_1,\ell_2,\ell_3)$ is not interactable under $T$. Hence, analyzing selection rules helps formally characterize which interactions are excluded by a specific TPO.

For $SO(3)$ (and $SU(2)$), the triangle condition often appears as a selection rule.
\begin{definition}[Triangle condition]
    Let $(\ell_1,\ell_2,\ell_3)$ be a triplet. They satisfy the triangle condition if
    \[\ell_1\leq \ell_2+\ell_3 \qquad \ell_2\leq \ell_1+\ell_3 \qquad \ell_3\leq \ell_1+\ell_2.\]
    It is convenient to define the triangular delta as $\{\ell_1,\ell_2,\ell_3\}=1$ if the above is satisfied and $0$ otherwise.
\end{definition}
In particular, for single irrep CGTP the triangle condition is the only selection rule\footnote{For multiple irreps we need to track the multiplicities}. For GTP, we say there is an additional selection rule that $\ell_1+\ell_2+\ell_3$ must be even.

\section{Generalizing FFT convolutions}
\label{sec:generalize_FFT}

In this section, we show how Gaunt tensor product naturally arises from attempting to generalize the ideas of FFT convolution for use in tensor products. We do so by first looking at the natural generalization of Fourier transforms to compact nonabelian groups. There are then two natural ways to generalize the convolution theorem, either we consider group convolutions or we consider pointwise multiplication of group signals. The former establishes the connection between group equivariant convolution networks and irrep based equivariant networks, while the latter is promising for computing tensor products. However, group Fourier transforms have an unwieldy irrep multiplicity. By quotienting the group with an appropriate subgroup, we remove redundant irreps. In the case of $SO(3)$, this quotient manifold is the sphere, giving us the spherical harmonics and the proposed Gaunt tensor product.

\begin{figure*}
    \centering
    \includegraphics[width=0.95\textwidth]{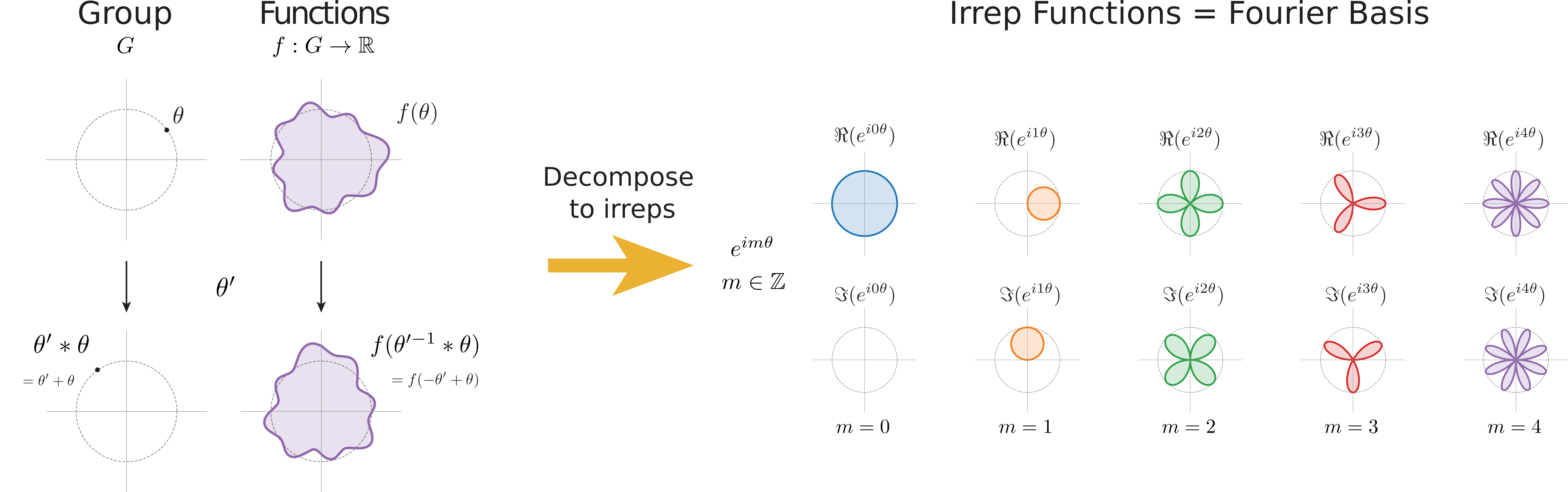}
    \caption{Fourier transforms from a group theoretic perspective. An action on a group induces an action on functions on the group. These functions form a vector space and hence this defines a group representation. We can decompose this representation into irreps, giving exactly the standard Fourier basis. The Peter-Weyl theorem generalizes this idea for compact Lie groups.}
    \label{fig:groupFFT}
\end{figure*}

\subsection{Group Fourier Transforms}

In this context, a periodic function can be regarded as a signal on $S^1$. One can endow the circle with a group structure by addition of polar angle modulo $2\pi$ to create the circle group $\TT\cong SO(2)$. The group action on the circle then induces an action on circle signals $S^1\to\CC$, namely a rotation by angle $\theta'$ rotates the signal $f(\theta)\to f'(\theta)=f(\theta-\theta')$. The set of circular signals forms a vector space and we can ask how it decomposes into irreps of $\TT$. These irreps correspond exactly to signals of the form $e^{in\theta}$, the standard Fourier basis. The Fourier coefficients are precisely the coefficients obtained when we transform to this irrep basis.

This connection between signals on a group and their irreps is generalized by the Peter Weyl theorem.
\begin{theorem}[Peter-Weyl Theorem]
    Let $G$ be a compact topological group. Let $\Sigma$ contain exactly one representative irrep from each isomorphism class of irreps. For each irrep $\pi\in\Sigma$, denote by $D^{(\pi)}$ the corresponding matrix for the irrep written in an orthonormal basis.
    
    Then the space of square-integrable functions $L^2(G)$ has an orthonormal basis consisting of
    \[\{\sqrt{d_{\pi}}D_{ij}^{(\pi)}|\pi\in\Sigma,\ 1\leq i,j\leq d_\pi\}\]
\end{theorem}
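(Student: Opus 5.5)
The plan is to split the Peter--Weyl theorem into two parts: orthonormality, which comes from Schur orthogonality, and completeness, which comes from the spectral theory of a compact self-adjoint convolution operator. Throughout, I normalize the Haar measure $\mu$ on $G$ so that $\mu(G)=1$, and I use the paper's convention that each $D^{(\pi)}$ is unitary in an orthonormal basis. The first step is to prove the Schur orthogonality relations
\[\int_G D^{(\pi)}_{ij}(g)\,\overline{D^{(\sigma)}_{kl}(g)}\,d\mu(g)=\frac{1}{d_\pi}\,\delta_{\pi\sigma}\delta_{ik}\delta_{jl}.\]
To get this, take any matrix $A$ of size $d_\pi\times d_\sigma$ and average it: $M=\int_G D^{(\pi)}(g)\,A\,D^{(\sigma)}(g)^{-1}\,d\mu$. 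Left invariance of Haar measure makes $M$ an intertwiner between $\sigma$ and $\pi$. Schur's lemma then forces $M=0$ when $\pi\not\cong\sigma$, and $M=\frac{\operatorname{tr}A}{d_\pi}I$ when $\pi=\sigma$, where the trace is read off by taking the trace of the average. Choosing $A$ to be elementary matrices, and using unitarity so that $D^{-1}=\overline{D}^{T}$, gives exactly the relation above. Hence $\{\sqrt{d_\pi}D^{(\pi)}_{ij}\}$ is an orthonormal set. Before this I need that every irrep of a compact group is finite dimensional and unitarizable; unitarity follows by averaging an inner product over $G$.

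Completeness is the main obstacle. Let $\mathcal{M}\subseteq L^2(G)$ be the closed span of all matrix coefficients, and suppose $f\in\mathcal{M}^\perp$; the goal is to show $f=0$. First observe that $\mathcal{M}$, and hence $\mathcal{M}^\perp$, is invariant under left and right translations, because $D^{(\pi)}(hg)=D^{(\pi)}(h)D^{(\pi)}(g)$ expresses a translated coefficient as a combination of coefficients. Next take an approximate identity: continuous functions $\phi_n\ge 0$ with $\phi_n(g^{-1})=\phi_n(g)$, supported in shrinking neighborhoods of $e$, and integrating to $1$. Convolution $T_n f=\phi_n * f$ is an integral operator with continuous kernel $\phi_n(gh^{-1})$ on a compact space. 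So $T_n$ is Hilbert--Schmidt, hence compact, and the symmetry of $\phi_n$ makes it self-adjoint. It also commutes with right translations.

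If $f\ne 0$, then $T_n f\to f$ in $L^2$, so $T_n f\ne 0$ for some $n$. Right translations preserve $\mathcal{M}^\perp$, and $T_n$ is built from them, so $T_n$ preserves $\mathcal{M}^\perp$. By the spectral theorem for compact self-adjoint operators, $T_n|_{\mathcal{M}^\perp}$ is nonzero and therefore has a nonzero eigenvalue $\lambda$ with a finite-dimensional eigenspace $E\subseteq\mathcal{M}^\perp$. Because $T_n$ commutes with right translation, $E$ is a finite-dimensional representation of $G$ under right translation. I pick an irreducible subspace $W\subseteq E$ with orthonormal basis $w_1,\dots,w_d$, so that $w_j(gh)=\sum_i w_i(g)D_{ij}(h)$ for some irrep in $\Sigma$, after a change of basis. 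Setting $g=e$ gives $w_j(h)=\sum_i w_i(e)D_{ij}(h)$, which writes $w_j$ as a matrix coefficient, so $w_j\in\mathcal{M}$. Since also $w_j\in\mathcal{M}^\perp$, we get $w_j=0$, a contradiction.

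The delicate technical point in that last step is evaluating at $e$. This requires the $w_i$ to be continuous, which holds because the image of $T_n$ consists of continuous functions. It also requires the translation identity to hold pointwise rather than just almost everywhere, which follows from the same continuity. With that settled, $\mathcal{M}^\perp=0$, so the set is complete and the Peter--Weyl theorem follows.
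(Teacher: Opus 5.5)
The paper does not prove this statement. It quotes the Peter--Weyl theorem as classical background, to motivate the Wigner-$D$ and spherical-harmonic bases, and gives no argument, so there is no in-paper route to compare against. Your proposal is the standard textbook proof: Schur orthogonality gives orthonormality, and a compact self-adjoint convolution operator commuting with right translations gives completeness. It is correct as written, including your handling of continuity when you evaluate at $g=e$.

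A few small points to tidy.

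\emph{Which translations build $T_n$.} $T_nf(g)=\int_G\phi_n(h)\,f(h^{-1}g)\,d\mu(h)$ is an average of \emph{left} translations; right translations are what it commutes with. So ``$T_n$ is built from them'' should refer to left translations. This is harmless, because you showed $\mathcal{M}^\perp$ is invariant under both. Alternatively, note that $T_n$ is self-adjoint and maps $\mathcal{M}$ into itself.

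\emph{Sequences versus single kernels.} A general compact group need not be first countable (e.g.\ an uncountable product of circles), so a sequence $\phi_n$ with shrinking supports may not exist. You only need one $\phi$ with $\phi*f\neq 0$. Take $\phi\ge 0$ of integral $1$ supported in a neighborhood $U$ of $e$. Then $\|\phi*f-f\|_2\le\sup_{h\in U}\|f(h^{-1}\,\cdot)-f\|_2$, and strong continuity of translation in $L^2$ lets you choose $U$ so that this is less than $\|f\|_2$.

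\emph{Finite dimensionality and continuity.} You do not need that every irrep is finite dimensional. The statement only concerns finite-dimensional continuous unitary irreps, and your completeness argument only produces finite-dimensional subrepresentations $W$. You should, however, say why the right-translation representation on $W$ is continuous, so that it really is isomorphic to a member of $\Sigma$. This holds because vectors in $W$ are continuous, hence uniformly continuous, functions on $G$.

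\emph{Implicit hypotheses.} The assumptions you made explicit are genuinely needed for the statement as written: complex scalars, Haar measure normalized to $\mu(G)=1$, $G$ Hausdorff, and $D^{(\pi)}$ unitary (orthonormal basis for an invariant inner product). Over $\mathbb{R}$, for instance, the $2\times 2$ rotation irreps of $SO(2)$ have $D_{11}=D_{22}$, so the claimed orthonormality fails.
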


In the case of $SO(3)$, this means an orthonormal basis for $SO(3)$ is given by the elements in the Wigner-D matrices scaled appropriately. We similarly then have generalized Fourier transform (GFT) coefficients obtained when transforming into this orthonormal basis. Due to orthonormality we can compute these as
\[M^{(\pi)}_{ij}(f)=\int_{G} \sqrt{d_{\pi}}f(g)D^{(\pi)}_{ij}(g) d\mu(g)\]
where $f:G\to\CC$ is our signal and $\mu$ is the Haar measure.

\subsection{Generalizing Convolution Theorems}

For the usual Fourier transform of periodic functions, there are two convolution theorems. One states that convolution of two circular signals gives pointwise multiplication on the coefficients. The other states pointwise multiplication of two circular signals gives convolution on the coefficients. So there are two natural ways to generalize for compact groups. We can either perform convolution of two group signals or pointwise multiplication of two group signals.

Convolution of two group signals has been extensively explored in prior work and forms the basis of early equivariant models \citep{cohen2016group, cohen2018spherical}. The Fourier coefficients naturally organize into matrices, corresponding exactly into the matrices forming the orthonormal basis. The group convolution then becomes matrix multiplication of the coefficients \citep{kostelec2008ffts,cohen2018spherical}. In group convolution networks, we typically view one signal as our features and the other signal as a filter. In irrep focused frameworks such as \texttt{e3nn} \citep{geiger2022e3nn}, the Fourier coefficients of the features consist of $d_\ell$ copies of irreps of type $\ell$ and the coefficients of the filter correspond to invariant weights for an equivariant linear layer.

Less explored in equivariant literature is pointwise multiplication of two group signals. To understand the resulting action on the Fourier coefficients, one must understand how the product of the basis matrices decompose. In the case of $SO(3)$, this is given by
\begin{align}
    D^{(\ell_1)}_{m_1n_1}(g) & D^{(\ell_2)}_{m_2n_2}(g)\nonumber\\
    = & \sum_{\ell_3}C^{\ell_3,m_1+m_2}_{\ell_1,m_1,\ell_2,m_2}C^{\ell_3,n_1+n_2}_{\ell_1,n_1,\ell_2,n_2}D^{(\ell_3)}_{m_1+m_2,n_1+n_2}(g)
    \label{eqn:Wig_D_prod}
\end{align}
where here we use the typical convention that indices $m_i,n_i\in\{-\ell,-\ell+1,\ldots,\ell-1,\ell\}$. Note the appearance of the Clebsch-Gordan coefficients which is promising. This formula generalizes to other Lie groups where we replace the Wigner D with the corresponding irrep matrix and the Clebsch-Gordan coefficients with those for the other group.

Note however that for $SO(3)$ there are $d_\ell=2\ell+1$ copies of irreps of type $\ell$. This is rather unwieldy. Further, the corresponding GFT scales poorly. The corresponding Nyquist sampling theorem requires number of points scaling as $L^3$. Hence it infeasible to use this to speed up tensor products of a pair of single irreps and we must use multiple input irreps to possibly achieve better asymptotic runtime/expressivity ratios.

\subsection{Reduction to Spherical Signals}

Here, we describe how quotienting the $SO(3)$ manifold reduces the irrep multiplicity, giving us the sphere $S^2$. This lets us derive the Gaunt coefficients and shows how the Gaunt tensor product is a natural consequence of trying to generalize the key idea of FFT convolutions. This also lets us generalize the process for arbitrary Lie groups and gives the analog of the sphere.

Consider the $SO(2)$ subgroup generated by rotations about the $z$ axis. Then we can take the left cosets to create the quotient manifold $SO(3)/SO(2)\cong S^2$ where the equivalence to $S^2$ is a well known correspondence. To derive the spherical harmonics from a representation theory perspective, we realize that there are two natural group actions on $SO(3)$ itself. We can either perform group multiplication on the left or on the right. That is $g'$ acting on $g$ either gives $g\to g'g$ or $g\to gg'$. Depending on whether we choose left or right multiplication, the induced action on the $SO(3)$ signals changes and the $d_\ell$ copies of irrep $\ell$ are given either by the columns or rows respectively of the Wigner D matrices.

Next, we note that the indices $m,n$ describe exactly how much that component changes under $z$ rotations. That is if $g_z(\theta)$ is a rotation about $z$ by angle $\theta$, we have
\[D^{(\ell)}_{m,n}(gg_z(\theta))=D^{(\ell)}_{m,n}(g)e^{in\theta}\]
\[D^{(\ell)}_{m,n}(g_z(\theta)g)=D^{(\ell)}_{m,n}(g)e^{im\theta}\]
If we quotient out by $z$ rotations acting on the right (giving left cosets), that means we need $n=0$ to obtain functions which are invariant under $z$ rotations. This means $D^{(\ell)}_{m,0}(gg_z)=D^{(\ell)}_{m,0}(g)$ for all $g_z\in SO(2)$ and hence we expect $D^{(\ell)}_{m,0}$ to form a basis for $SO(3)/SO(2)$. It turns out that
\[D^{(\ell)}_{m,0}(g)=\sqrt{\frac{4\pi}{2\ell+1}}Y^{m,*}_\ell(g\hat{z})\]
so the Wigner D matrices indeed reduce to the spherical harmonics\footnote{Throughout this paper, we use quantum mechanical conventions and our spherical harmonics are complex. This also facilitates analysis. The real spherical harmonics are related by a basis change and hence our main results still follow.}. By substituting this relation into \eqref{eqn:Wig_D_prod}, we obtain
\begin{align*}
    Y^{m_1,*}_{\ell_1}(g\hat{z}) & Y^{m_1,*}_{\ell_1}(g\hat{z})\\
    = & \sqrt{\frac{(2\ell_1+1)(2\ell_2+1)}{4\pi(2\ell_3+1)}}\times\\
    & \ \sum_{\ell_3}C^{\ell_3,m_1+m_2}_{\ell_1,m_1,\ell_2,m_2}C^{\ell_3,0}_{\ell_1,0,\ell_2,0}Y_{\ell_3}^{m_1+m_2,*}(g\hat{z}).
\end{align*}
These are exactly the Gaunt coefficients \citep{gaunt1929iv}.

Hence, we see that the Gaunt tensor product is what naturally arises when trying to exploit the pointwise multiplication analog of the convolution theorem for compact lie groups and then trying to form a quotient manifold to reduce the irrep multiplicity. It turns out $SO(2)$ is called a maximal torus and for general Lie groups, we can also quotient by a maximal torus to eliminate irrep multiplicity.

However, as also pointed by \citet{xieprice} Gaunt tensor product suffers from an antisymmetry problem and many interactions such as cross products are not possible. In addition, the procedure of quotienting out the maximal torus does not guarantee every irrep is present in the corresponding harmonics. For example for $SU(2)$, the maximal torus is $U(1)$ and we still have $SU(2)/U(1)\cong S^2$. Hence we still only obtain the spherical harmonics and integer representations, missing all the half integer representations of $SU(2)$.

\section{Generalizing Spherical Harmonics}
\label{sec:tensor_harmonics}

To remedy the antisymmetry issue, we must generalize the spherical harmonics in a suitable way. In physics literature, there are two types of generalizations: spin weighted spherical harmonics (SWSH) and tensor spherical harmonics (TSH). The former were introduced by \citet{newman1966note} to study gravitational waves and also explored in the equivariant literature as a way to generalize spherical CNNs \citep{esteves2020spin}. The latter is a decomposition of tensor valued signals on the sphere. The vector spherical harmonics in particular play an important role in many areas of physics \citep{barrera1985vector,carrascal1991vector,moses1974use,weinberg1994monopole}. The spin-weighted and tensor harmonics are intimately related and we believe both can be used to solve the antisymmetry issue \citep{thorne1980multipole,ledesma2020spherical}. In this work, we will focus on tensor harmonics.

\subsection{Tensor Spherical Harmonics}
Suppose instead of a scalar signal, we instead have a tensor valued signal which transforms under some group representation. Since any representation can be decomposed as a sum of irreps, we can consider only irrep valued signals. Now when we act on such a signal, we must also transform the output. That is, any irrep $s$ signal $\mathbf{f}_s:S^2\to\R^{2s+1}$ transforms as 
\[\mathbf{f}^s(x)\to D^{(s)}(g)\mathbf{f}^s(g^{-1}x).\]
We can then similarly ask how to decompose the vector space of such signals into a direct sum of irreps and form an orthonormal basis.

We can interpret the transformation of irrep signals as a tensor product representation of a transform acting on the inputs, and a transform acting on the outputs. The spherical harmonics already tell us how to decompose the input transformation into irreps while the output representation is assumed to be an irrep. Therefore, to reduce the overall transformation into irreps, we can simply just use the Clebsch-Gordan decomposition. This leads to the following definition of tensor spherical harmonics.
\begin{definition}[Tensor spherical harmonics]
    For integers $j,\ell,s,m$ where $|j-1|\leq\ell\leq j+1$ and $|m_j|\leq j$, we define the functions $\Y_{j,m_j}^{\ell,s}:S^2\to\R^{2s+1}$ as
    \[(\Y_{j,m_j}^{\ell,s}(\hatr))_{m_s}=\sum_{m_\ell}C^{j,m_j}_{\ell,m_\ell,s,m_s}Y^{m_\ell}_{\ell}(\hatr).\]
    We refer to these functions as the tensor spherical harmonics (TSH).
    \label{def:TSH}
\end{definition}
In physics, $\ell$ is typically interpreted as orbital angular momentum, $s$ as spin, and $j$ as total angular momentum. Note that the TSH transforms as an irrep of type $j$.

\begin{remark}
    We would like to point out that this choice for TSH is not unique. Ours is the most natural choice from a representation theory perspective, however there may be other choices that are natural for different contexts. For instance, $s=1$ gives the vector spherical harmonics. However, the outputs of $\Y_{j,m}^{j-1,1}$ and $\Y_{j,m}^{j+1,1}$ are not purely perpendicular or tangent to the local tangent plane. Hence, one often sees an alternative basis consisting of purely radial, curl-free, and divergence free components.
\end{remark}

\subsection{Computing TSH transform}

Because Definition~\ref{def:TSH} relates the TSH to regular scalar SHs, we can compute a forward TSH transform by performing multiple regular SH transforms and performing a Clebsch-Gordan decomposition on the resulting coefficients to obtain the TSH coefficients. For the reverse TSH transform, we can similarly use Clebsch-Gordan coefficients to transform into multiple regular SH coefficients and then just perform multiple reverse SH transforms.

Because we now compute multiple SH transforms and we compute a Clebsch-Gordan decomposition, the above procedure adds a complexity factor that depends on the chosen spin $s$. However as we shall see in \autoref{sec:VSH_all_you_need}, we only need up to $s=1$ to be able to compute any tensor product.

\subsection{Irrep signal tensor product}

Given any two tensors, the most general bilinear interaction is a tensor product. Therefore, given any two irrep signals, the most general interaction is to perform a pointwise tensor product. The resulting signal output transforms as a tensor product representation. We can then use the Clebsch-Gordan decomposition to change to an irrep basis. In particular, if we have irrep signals $\mathbf{f}_{s_1}$ and $\mathbf{f}_{s_2}$ and desire a resulting irrep signal of type $s_3$, we can get
\[(\mathbf{f}^{s_1}\otimes\mathbf{f}^{s_2})^{s_3}_{m_3}(\hatr)=\sum_{m_1,m_2}C^{s_3,m_3}_{s_1,m_1,s_2,m_2}\mathbf{f}^{s_1}_{m_1}(\hatr)\mathbf{f}^{s_2}_{m_2}(\hatr).\]
If $s_1=s_2=s_3=0$, the above operation is the usual pointwise multiplication of scalar signals.

Using this interaction, we can now define a general class of TPOs we call irrep signal tensor products (ISTPs). For any triplet $(s_1,s_2,s_3)$, we can perform the following
\begin{enumerate}
    \item Interpret input irreps labeled by $(j_i,\ell_i)$ as coefficients for a set of TSHs $\Y_{j_i}^{\ell_i,s_i}$.
    \item Do a reverse TSH transform to form a tensor signal
    \[\mathbf{f}^{s_i}=\sum_{j_i,\ell_i,m_i}(\x_i)^{(j_i,\ell_i)}_{m_{i}}\Y_{j_i,m_{i}}^{\ell_i,s_i}.\]
    \item Compute pointwise tensor products to obtain $\mathbf{f}^{s_3}=(\mathbf{f}^{s_1}\otimes \mathbf{f}^{s_2})^{s_3}$.
    \item Finally, we perform a TSH transform on $\mathbf{f}^{s_3}$ to get corresponding TSH coefficients labeled by $(j_3,\ell_3)$.
\end{enumerate}
Note that the $(0,0,0)$ version corresponds to Gaunt tensor product.

\subsection{Generalized Gaunt formula}
To analyze the ISTP procedure, we need a formula to decompose 2 TSHs.
\begin{theorem}
    \label{thm:general_gaunt}
    We have the following decomposition of tensor spherical harmonics
    \begin{align*}
        & (\Y_{j_1,m_1}^{\ell_1,s_1} \otimes \Y_{j_2,m_2}^{\ell_2,s_2})_{s_3}\\
        & \quad = \sqrt{\frac{(2j_1+1)(2j_2+1)(2\ell_1+1)(2\ell_2+1)(2s_3+1)}{4\pi}}\\
        &\quad\qquad \sum_{j_3,\ell_3,m_3}\begin{Bmatrix}
            j_1 & \ell_1 & s_1\\
            j_2 & \ell_2 & s_2\\
            j_3 & \ell_3 & s_3\\
        \end{Bmatrix}C^{\ell,0}_{\ell_1,0,\ell_2,0}\\
        & \qquad\qquad\qquad\qquad\qquad\qquad C^{j_3,m_3}_{j_1,m_1,j_2,m_2}\Y_{j_3,m_3}^{\ell_3,s_3}.
    \end{align*}
\end{theorem}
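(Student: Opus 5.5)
Expand both tensor harmonics through Definition~\ref{def:TSH}, multiply the scalar spherical harmonics using the Gaunt formula of the previous section, and recouple the resulting product of Clebsch--Gordan coefficients into a $9j$ symbol. Concretely, the $m_3$-th component of the left-hand side (calling the output spin index $\mu_3$ to avoid a clash with $m_3$) is
\[\sum_{\mu_1,\mu_2}C^{s_3,\mu_3}_{s_1,\mu_1,s_2,\mu_2}\sum_{m_{\ell_1},m_{\ell_2}}C^{j_1,m_1}_{\ell_1,m_{\ell_1},s_1,\mu_1}C^{j_2,m_2}_{\ell_2,m_{\ell_2},s_2,\mu_2}Y^{m_{\ell_1}}_{\ell_1}Y^{m_{\ell_2}}_{\ell_2}.\]
In the last step I invoke the scalar Gaunt formula, in its unconjugated form obtained by conjugating the identity derived from \eqref{eqn:Wig_D_prod}; the CG coefficients are real, so this is harmless. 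It replaces $Y_{\ell_1}Y_{\ell_2}$ by
\[\sum_{\ell_3}\sqrt{\tfrac{(2\ell_1+1)(2\ell_2+1)}{4\pi(2\ell_3+1)}}\,C^{\ell_3,0}_{\ell_1,0,\ell_2,0}\,C^{\ell_3,m_{\ell_3}}_{\ell_1,m_{\ell_1},\ell_2,m_{\ell_2}}\,Y^{m_{\ell_3}}_{\ell_3}.\]
The reduced coefficient $C^{\ell_3,0}_{\ell_1,0,\ell_2,0}$ factors out immediately. (The $C^{\ell,0}$ in the statement should read $C^{\ell_3,0}$.)

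Next I reexpress $Y^{m_{\ell_3}}_{\ell_3}$ in the TSH basis of spin $s_3$. By orthogonality of CG coefficients, $Y^{m_{\ell_3}}_{\ell_3}\,\delta$-paired with the spin index gives
\[C^{s_3,\mu_3}\ldots Y^{m_{\ell_3}}_{\ell_3}\ \longrightarrow\ \sum_{j_3,m_3}C^{j_3,m_3}_{\ell_3,m_{\ell_3},s_3,\mu_3}\,(\Y^{\ell_3,s_3}_{j_3,m_3})_{\mu_3}.\]
This inversion is valid because the vector $(Y^{m_{\ell_3}}_{\ell_3}\,e_{\mu_3})$ decomposes as $\sum_{j_3}C\,\Y$. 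After this substitution, the coefficient of $(\Y^{\ell_3,s_3}_{j_3,m_3})_{\mu_3}$ is a sum, over all internal magnetic indices ($m_{\ell_1},m_{\ell_2},\mu_1,\mu_2,m_{\ell_3}$), of a product of five CG coefficients: two coupling $\ell_i,s_i\to j_i$, one coupling $s_1,s_2\to s_3$, one coupling $\ell_1,\ell_2\to\ell_3$, and one coupling $\ell_3,s_3\to j_3$.

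The key step, and the main obstacle, is to show that this five-CG sum equals
\[\sqrt{(2j_1+1)(2j_2+1)(2\ell_3+1)(2s_3+1)}\begin{Bmatrix}j_1&\ell_1&s_1\\ j_2&\ell_2&s_2\\ j_3&\ell_3&s_3\end{Bmatrix}C^{j_3,m_3}_{j_1,m_1,j_2,m_2},\]
independent of $\mu_3$. I would prove it by recoupling. Take the standard $9j$ definition as the overlap $\langle(\ell_1s_1)j_1,(\ell_2s_2)j_2;j_3m_3\mid(\ell_1\ell_2)\ell_3,(s_1s_2)s_3;j_3m_3\rangle$, expressed as a sum of six CG coefficients times the normalization $\sqrt{(2j_1+1)(2j_2+1)(2\ell_3+1)(2s_3+1)}$, with row and column transpositions of the $9j$ arranged to match. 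Multiply both sides by $C^{j_3,m_3}_{j_1,m_1,j_2,m_2}$ and use orthogonality of that coupling in $(m_1,m_2)$. One must also check that the implicit $\mu_3$-dependence drops out, which follows from the fact that the product coupling is equivariant (Wigner--Eckart), so the coefficient must be proportional to a CG coefficient in the $j$'s.

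The remaining bookkeeping is straightforward. The factor $(2\ell_3+1)^{1/2}$ from the $9j$ normalization cancels the $(2\ell_3+1)^{-1/2}$ from Gaunt, leaving the claimed prefactor $\sqrt{(2j_1+1)(2j_2+1)(2\ell_1+1)(2\ell_2+1)(2s_3+1)/4\pi}$. I expect the hardest part to be sign and ordering conventions: the order of couplings (e.g.\ $\ell,s$ versus $s,\ell$) in the $9j$ definition can introduce phases, and these must match Definition~\ref{def:TSH}'s $C^{j,m_j}_{\ell,m_\ell,s,m_s}$ convention.
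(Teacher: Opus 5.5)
Your route works in outline and differs from the paper's, but one step, the handling of the spin index $\mu_3$, is wrong as written and needs repair (second paragraph).

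\textbf{Comparison with the paper.} The paper first applies the CG symmetry $C^{j,m_j}_{\ell,m_\ell,s,m_s}=(-1)^{\ell-m_\ell}\sqrt{(2j+1)/(2s+1)}\,C^{s,m_s}_{j,m_j,\ell,-m_\ell}$. This rewrites each TSH signal as the coefficient vector $\x^{(j_i,\ell_i)}$ coupled with $\Y^*_{\ell_i}$ to spin $s_i$. The left-hand side then literally is the four-irrep coupling $(j_1\ell_1)s_1,(j_2\ell_2)s_2\to s_3$, with $s_3$ as the total. The paper recouples to $(j_1j_2)j_3,(\ell_1\ell_2)\ell_3\to s_3$ with the $9j$, applies the conjugated Gaunt formula to the $\ell$ branch, and flips the $(j_3\ell_3)s_3$ coupling back into a TSH. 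A leftover phase $(-1)^{\ell_1+\ell_2+\ell_3}$ is then removed using the parity of $C^{\ell_3,0}_{\ell_1,0,\ell_2,0}$.

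You instead keep every coupling in the orientation of Definition~\ref{def:TSH}, apply Gaunt first, and invert the TSH definition. You then read off the recoupling coefficient with $j_3$ as the total, $\langle(\ell_1s_1)j_1,(\ell_2s_2)j_2;j_3|(\ell_1\ell_2)\ell_3,(s_1s_2)s_3;j_3\rangle$. This gives $\begin{Bmatrix}\ell_1&s_1&j_1\\ \ell_2&s_2&j_2\\ \ell_3&s_3&j_3\end{Bmatrix}$, which equals the paper's symbol by a cyclic column permutation.

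Your sign worry is unfounded: all six CG coefficients appear in exactly the order of the standard $9j$ definition. So your derivation is phase-free, with no $(-1)^{\ell}$ or $\sqrt{(2j+1)/(2s+1)}$ factors and no parity argument needed. The paper's version instead gives a transparent ``change of coupling order'' picture in which the CGTP of the input coefficient vectors appears directly.

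\textbf{The step that needs repair.} You call the coefficient of the \emph{component} $(\Y^{\ell_3,s_3}_{j_3,m_3})_{\mu_3}$ a five-CG sum with $\mu_3$ held fixed. You then claim, via Wigner--Eckart, that it is independent of $\mu_3$. That claim is false.
\begin{itemize}
\item With $\mu_3$ fixed, $m_{\ell_3}=m_1+m_2-\mu_3$ is forced, so the partial sum factorizes as $T_{\mu_3}\,C^{j_3,m_3}_{\ell_3,m_3-\mu_3,s_3,\mu_3}$ with $T_{\mu_3}$ independent of $j_3$.
\item This genuinely depends on $\mu_3$. For example, with $\ell_1=0$, $j_1=s_1=s_3=1$, $s_2=0$, it vanishes unless $\mu_3=m_1$.
\item Component-wise coefficients are not unique, which is how the false claim slips in.
\end{itemize}

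The fix is the vector identity you already quote. Write the left side as $\sum_{\mu_3}(\cdot)_{\mu_3}e_{\mu_3}$ and substitute $Y^{m_{\ell_3}}_{\ell_3}e_{\mu_3}=\sum_{j_3,m_3}C^{j_3,m_3}_{\ell_3,m_{\ell_3},s_3,\mu_3}\Y^{\ell_3,s_3}_{j_3,m_3}$. Then $\mu_3$ is summed together with $\mu_1,\mu_2$, and no independence claim is needed.

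The Wigner--Eckart/Schur argument is still needed, but only for the other thing you state: the full sum is proportional to $C^{j_3,m_3}_{j_1,m_1,j_2,m_2}$. Concretely, the projection of $|(\ell_1\ell_2)\ell_3,(s_1s_2)s_3;j_3m_3\rangle$ onto $V_{j_1}\otimes V_{j_2}$ is a multiple of $|(j_1j_2)j_3m_3\rangle$. Only after that does contracting with this CG over $(m_1,m_2)$ identify the constant as the six-CG overlap, namely $\sqrt{(2j_1+1)(2j_2+1)(2\ell_3+1)(2s_3+1)}$ times the $9j$. Multiplying by the CG and using orthogonality alone does not determine it.
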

Here, the $3\times3$ curly brace denotes the Wigner 9j symbol defined in Definition~\ref{def:9j}.
\begin{proof}[Proof sketch.]
    To compute the signal at each point, we first couple our TSH coefficient of type $j_i$ with a spherical harmonic of type $\ell_i$ to form a resulting $s_i$. We then couple $s_1,s_2$ to form the final $s_3$. This is a specific choice to couple 4 irreps of type $\ell_1,j_1,\ell_2,j_2$. However, we can alternatively couple $\ell_1,\ell_2$ to form $\ell_3$ and $j_1,j_2$ to form $j_3$ before coupling $\ell_3,j_3$ to form $s_3$. The Wigner 9j gives the change of basis between these coupling orders. In the latter, the $\ell_1,\ell_2$ coupling gives a $\ell_3$ spherical harmonic and the $j_1,j_2$ coupling gives the desired CGTP coefficients. The last $\ell_3,j_3$ coupling is exactly a TSH coupling giving the final $\Y^{j_3,m_3}_{\ell_3,s_3}$. 
    
    Full proof is in \autoref{sec:general_gaunt}.
\end{proof}

\begin{remark}
    To the best of our knowledge, this general formula and our derivation method is novel. Versions for $s\leq 1$ can be found in \citet{varshalovich1988quantum}.
\end{remark}

\section{Vector Signals is All You Need}
\label{sec:VSH_all_you_need}

In this section, we prove only up to $s=1$ is needed to perform all possible tensor product interactions. In particular, we focus on the $(1,1,1)$ ISTP which is equivalent to performing cross products of vector spherical signals (up to a factor of $\sqrt{2}$). We call this vector signal tensor product (VSTP). We then derive the corresponding selection rules and then prove one always has an interaction. Finally, we show a constant number of VSTPs can be used to simulate computing full CGTP of a pair of irreps.

\subsection{Vector signal tensor product}
\label{sec:VSTP}
\begin{figure*}[!h]
    \centering
    \includegraphics[width=0.7\textwidth]{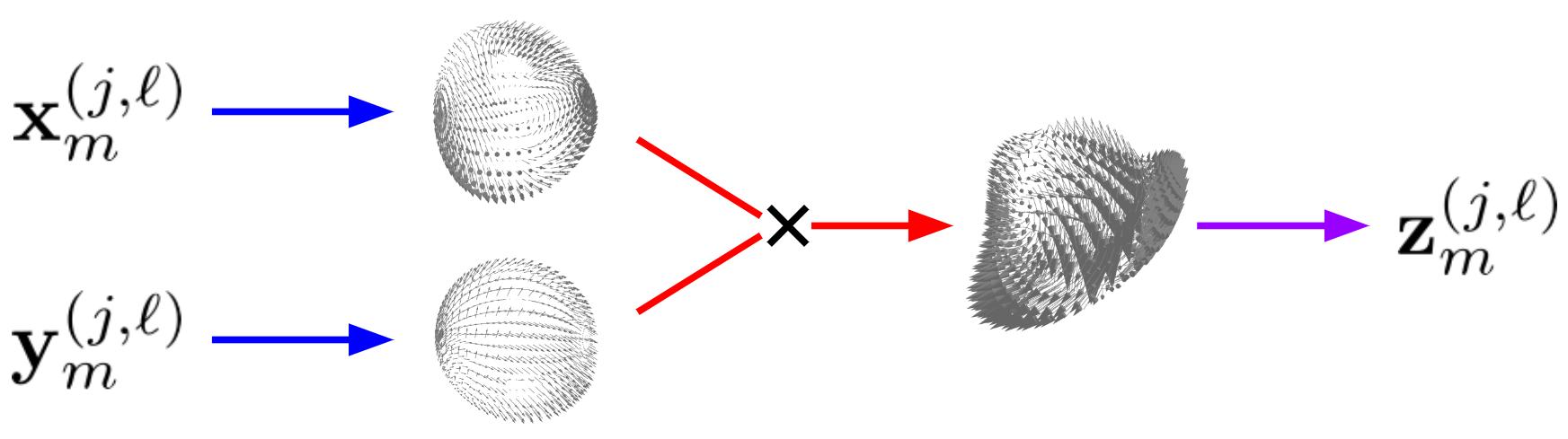}
    \caption{Schematic of the process in taking a vector signal tensor product. We interpret input irreps as vector SH coefficients to create vector spherical signals. We then take pointwise cross products of the two signals to create a new signal which we decompose back into vector SH coefficients.}
    \label{fig:VSTP_diagram}
\end{figure*}

Here, we consider $s=1$ signals, which corresponds to vector signals on the sphere.
Inspecting the Clebsch-Gordan coefficients, one can show that
\[(\f^{(1)}\otimes\g^{(1)})^{(1)}_{m_3}=-\frac{1}{\sqrt{2}}(\f^{(1)}\times\g^{(1)})_{m_3}\]
so the interaction is proportional to taking a cross product.




\subsection{Selection rules and completeness}
\label{sec:VSTP_selection_rules}
Using \autoref{thm:general_gaunt}, we can derive selection rules for VSTP.
\begin{theorem}[Selection rules for VSTP]
    \label{thm:VSH_selection}
    Consider irrep labels $j_1,\ell_1,j_2,\ell_2,j_3,\ell_3$. The corresponding interaction
    is nonzero if and only if the following are satisfied:
    \begin{enumerate}
        \item $\{j_1,\ell_1,1\}=\{j_2,\ell_2,1\}=\{j_3,\ell_3,1\}=1$
        \item $\{j_1,j_2,j_3\}=1$
        \item $\{\ell_1,\ell_2,\ell_3\}=1$
        \item $\ell_1+\ell_2+\ell_3$ is even
        \item There is no choice of distinct $a,b,c$ such that $j_a=\ell_a$ and $(j_b,\ell_b)=(j_c,\ell_c)$
    \end{enumerate}
\end{theorem}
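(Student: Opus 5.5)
Setting $s_1=s_2=s_3=1$ in \autoref{thm:general_gaunt}, the coefficient of $\Y^{\ell_3,1}_{j_3,m_3}$ in $(\Y^{\ell_1,1}_{j_1,m_1}\otimes\Y^{\ell_2,1}_{j_2,m_2})_1$ is a positive constant times
\[
\begin{Bmatrix} j_1 & \ell_1 & 1\\ j_2 & \ell_2 & 1\\ j_3 & \ell_3 & 1\end{Bmatrix}\, C^{\ell_3,0}_{\ell_1,0,\ell_2,0}\, C^{j_3,m_3}_{j_1,m_1,j_2,m_2}.
\]
The interaction is nonzero for some choice of $m$'s exactly when each factor can be nonzero. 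So the plan is to treat the three factors one at a time.

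\textbf{The two Clebsch--Gordan factors.} The factor $C^{j_3,m_3}_{j_1,m_1,j_2,m_2}$ is nonzero for some $m_1,m_2,m_3$ if and only if $\{j_1,j_2,j_3\}=1$, which gives condition 2. The factor $C^{\ell_3,0}_{\ell_1,0,\ell_2,0}$ is nonzero if and only if $\{\ell_1,\ell_2,\ell_3\}=1$ and $\ell_1+\ell_2+\ell_3$ is even. This is the classical parity rule, which follows from the symmetry $C^{\ell_3,-m}_{\ell_1,-m_1,\ell_2,-m_2}=(-1)^{\ell_1+\ell_2-\ell_3}C^{\ell_3,m}_{\ell_1,m_1,\ell_2,m_2}$ together with the known closed form showing nonvanishing in the even case. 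This gives conditions 3 and 4.

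\textbf{The 9j symbol.} I would use Definition~\ref{def:9j}. A 9j symbol vanishes unless every row and column satisfies the triangle condition. The rows give condition 1. The columns $\{j_1,j_2,j_3\}$ and $\{\ell_1,\ell_2,\ell_3\}$ are conditions 2 and 3, and the column $\{1,1,1\}$ is automatic. The remaining task, and the main obstacle, is to determine exactly when a 9j with last column $(1,1,1)$ vanishes despite all triangles holding.

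\textbf{Reduction to cases.} Condition 1 forces $\ell_i\in\{j_i-1,j_i,j_i+1\}$, so only the finitely many offset patterns $\delta_i=\ell_i-j_i\in\{-1,0,1\}$ need to be checked. I would use the standard closed form of the 9j as a sum over 6j symbols,
\[
\sum_x (-1)^{2x}(2x+1)\begin{Bmatrix} j_1 & j_2 & j_3\\ 1 & 1 & x\end{Bmatrix}\begin{Bmatrix} \ell_1 & \ell_2 & \ell_3\\ j_2 & x & 1\end{Bmatrix}\begin{Bmatrix} 1 & 1 & 1\\ x & j_1 & \ell_1\end{Bmatrix},
\]
where $x\in\{0,1,2\}$ because of the $(1,1,x)$ triangle. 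Equivalently, the tabulated algebraic formulas for 9j symbols with a column of ones (as in \citet{varshalovich1988quantum}) can be used. Each offset pattern then gives an explicit rational expression in $j_1,j_2,j_3$.

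\textbf{Zero locus.} Condition 4 sharply restricts the patterns: $\sum\delta_i\equiv j_1+j_2+j_3 \pmod 2$. For each surviving pattern I will locate the zeros of the explicit expression. I expect the factor to be generically nonzero and to vanish only through symmetry. Swapping two columns multiplies the 9j by $(-1)^{S}$, where $S$ is the sum of all nine entries. If $j_a=\ell_a$ and $(j_b,\ell_b)=(j_c,\ell_c)$, swap the $j$ and $\ell$ columns. This fixes rows $b$ and $c$ up to their own identical pair and maps row $a$ to itself. It should produce a sign $-1$ under the parity constraint, forcing the 9j to be zero; this gives the necessity direction of condition 5. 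For sufficiency I must verify, case by case, that the explicit formulas have no other zeros on the integer lattice allowed by conditions 1--4. This is the step I expect to be hardest. There I would factor each closed form and check that the remaining polynomial factors have no admissible integer roots.
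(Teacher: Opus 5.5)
You follow the paper's route: specialize the generalized Gaunt formula to $s_1=s_2=s_3=1$, read rules 1--3 off the row and column triangles of the 9j symbol and rule 4 off $C^{\ell_3,0}_{\ell_1,0,\ell_2,0}$, get rule 5 from a permutation symmetry, and get sufficiency from the closed forms for 9j symbols with a column of ones.

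Your rule-5 argument uses the wrong permutation. Swapping the $j$- and $\ell$-columns turns rows $b,c$ into $(\ell_b,j_b,1)$. That is a different symbol unless $j_b=\ell_b$, so the swap does not relate the 9j to itself. The paper instead swaps rows $b$ and $c$. These rows are identical, so the symbol is unchanged, while the phase is $(-1)^S$ with $S=2j_a+2(j_b+\ell_b)+3$. That $S$ is odd, so no parity hypothesis is needed. Your 6j expansion is also mis-transcribed. The first factor should be $\{j_1,j_2,j_3;\ell_3,1,x\}$, and $x$ is cut down by the triads $(j_1,1,x)$ and $(\ell_1,1,x)$; it is not restricted to $\{0,1,2\}$.

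The more serious problem is that the sufficiency check you defer cannot succeed, because the ``if'' direction is false as stated. The column swap you wrote down does prove a vanishing result, just a different one. When $j_i=\ell_i$ for all $i$, the symbol $\{j_1,j_1,1;j_2,j_2,1;j_3,j_3,1\}$ is invariant under that swap with phase $(-1)^{2(j_1+j_2+j_3)+3}=-1$, so it is zero. Rule 5 does not exclude this case when the three pairs are distinct.

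\emph{Counterexample.} Take $(j_i,\ell_i)=(1,1),(2,2),(3,3)$. It satisfies rules 1--5: the triangle $\{1,2,3\}$ holds, $\ell_1+\ell_2+\ell_3=6$ is even, and no two pairs coincide. Yet the interaction vanishes. Geometrically, $\mathbf{Y}^{j,1}_{j,m}\propto\hat{\mathbf r}\times\nabla Y^m_j$ is tangent to the sphere. The cross product of two tangent fields is radial, and a radial field has only $\ell=j\pm1$ components, so the $\ell_3=j_3$ coefficient is zero.

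The paper's proof has the same hole. Its table entry $(\lambda,\mu,\nu)=(0,0,0)$ is identically $0$ and is credited to ``5abc'', but 5a--5c apply only when two of $a,b,c$ coincide. The fix is to add ``not all $j_i=\ell_i$'' to rule 5. With that amendment, the zero factors in the paper's other $26$ tabulated patterns appear to be accounted for by rules 1--3 and 5. These are the $c-a$, $b-c$, $a-b$ factors, the $s-2a$-type factors, and the reciprocal factorials. Your case check should then go through, with the row swap replacing your column-swap argument for rule 5.
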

\begin{proof}[Proof sketch]
    Rules 1-3 follow from the fact that the triangle condition must be satisfied in the rows and columns of the Wigner 9j symbol. Rule 5 follows from the (anti)symmetry relation of the Wigner 9j under odd row or column permutations. Rule 4 follows from the even selection rule of the $C^{\ell_3,0}_{\ell_1,0,\ell_2,0}$ coefficient.

    To show these are always nonzero, one looks at the explicit expressions of the Wigner 9j symbols documented in \citet{varshalovich1988quantum}. Tedious inspection of these coefficients then show that they only become zero when it fails the selection rules 1-4. Finally, it is known that $C^{\ell,0}_{\ell_1,0,\ell_2,0}$ is nonzero if and only if it satisfies rule 5 \cite{raynal1978definition}.

    Full proof is in \autoref{sec:VSH_selection}.
\end{proof}
In contrast to GTP, VSTP selection rules allow all the possible paths except for multiplication of scalars.
\begin{theorem}
    \label{thm:VSH_selection_limits}
    Suppose $\{j_1,j_2,j_3\}=1$ and the $j's$ are not all 0. Then $(j_1,j_2,j_3)$ is interactable under a VSTP of sufficiently high degree.
\end{theorem}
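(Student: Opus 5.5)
The proof reduces interactability to exhibiting a single admissible choice of orbital labels, after which \autoref{thm:VSH_selection} supplies a nonzero coefficient. Concretely, I take $L_X$ to embed $V_{j_1}$ into the slot of $X$ holding the TSH coefficients labeled $(j_1,\ell_1)$, take $L_Y$ to embed $V_{j_2}$ into the slot labeled $(j_2,\ell_2)$, and take $L_Z$ to project $Z$ onto the slot labeled $(j_3,\ell_3)$. The degree of the VSTP is chosen large enough to contain these labels, which is where ``sufficiently high degree'' is used.

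By \autoref{thm:general_gaunt}, the composite bilinear map is a scalar multiple of the CG map $C^{j_3,m_3}_{j_1,m_1,j_2,m_2}$, and that map is nonzero because $\{j_1,j_2,j_3\}=1$. The scalar is the product of the 9j symbol and $C^{\ell_3,0}_{\ell_1,0,\ell_2,0}$. By \autoref{thm:VSH_selection} this scalar is nonzero exactly when rules 1--5 hold. Rule 2 is our hypothesis, so it remains to choose $\ell_1,\ell_2,\ell_3$ satisfying rules 1, 3, 4 and 5. Rule 1 says $\ell_i\in\{j_i-1,j_i,j_i+1\}$ with $\ell_i\ge 0$, and in particular forces $\ell_i=1$ when $j_i=0$.

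Set $J=j_1+j_2+j_3$; the construction splits on the parity of $J$.
\begin{itemize}
\item \emph{$J$ odd.} Take $\ell_i=j_i+1$ for all $i$. Rule 1 holds, including the case $j_i=0$. For rule 3, $\ell_a=j_a+1\le j_b+j_c+2=\ell_b+\ell_c$ by the triangle inequality for the $j$'s. For rule 4, $\sum\ell_i=J+3$ is even. Rule 5 holds vacuously, since no $a$ has $j_a=\ell_a$.
\item \emph{$J$ even, $j$'s not all equal.} After relabeling, take $j_1\ne j_2$ and $j_3\ge1$, and set $(\ell_1,\ell_2,\ell_3)=(j_1+1,j_2+1,j_3)$. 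The relabeling is possible: if all three $j$'s are distinct, put a nonzero one in position 3. If exactly two are equal, say $j_a=j_b=p\ne q=j_c$, put one copy of $p$ in position 3; then $p\ge1$, since $p=0$ would force $q=0$ by the triangle condition, contradicting $p\ne q$. With these labels, rule 1 holds, rule 3 follows as before, and $\sum\ell_i=J+2$ is even. For rule 5, the only index with $j_a=\ell_a$ is $a=3$, and $(j_1,\ell_1)\ne(j_2,\ell_2)$ because $j_1\ne j_2$.
\item \emph{$J$ even, $j_1=j_2=j_3=p$.} Since the $j$'s are not all zero and $3p$ is even, $p$ is even with $p\ge2$. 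Take $(\ell_1,\ell_2,\ell_3)=(p+1,p-1,p)$. Then $\sum\ell_i=3p$ is even. Rule 3 holds since $p+1\le 2p-1$ for $p\ge2$. For rule 5, only $a=3$ has $j_a=\ell_a$, and the remaining pairs $(p,p+1)$ and $(p,p-1)$ are distinct.
\end{itemize}

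The main obstacle is not this combinatorics but the ``if'' direction of \autoref{thm:VSH_selection}, which guarantees that the 9j symbol actually does not vanish for these choices. If that direction were in doubt, I would verify nonvanishing directly for the three concrete label patterns above using the closed-form 9j symbols with one entry equal to $1$ tabulated in \citet{varshalovich1988quantum}. Only the patterns $(j+1,j+1,j)$, $(j+1,j+1,j+1)$ and $(p+1,p-1,p)$ are needed, which keeps that check finite in form.
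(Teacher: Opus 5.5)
Your proof is correct. It takes a genuinely different route from the paper's, and in one respect a safer one.

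\textbf{How the casework differs.} The paper orders $j_1\le j_2\le j_3$ and splits by the equality pattern of the $j$'s: all distinct, two equal (two subcases), all equal. Inside each case it splits by parity, which gives seven label patterns that stay at or near $\ell_i=j_i$. You split first on the parity of $J$. For odd $J$ you use the single uniform choice $\ell_i=j_i+1$, so rule 5 is vacuous and rule 3 follows directly from the triangle inequality for the $j$'s. You need only three patterns in total, namely the tabulated 9j entries $(\lambda,\mu,\nu)=(-1,-1,-1),(-1,-1,0),(-1,1,0)$.

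\textbf{Why your choices matter.} None of your patterns sets $\ell_i=j_i$ for all three $i$. The paper's Case 1 with $J$ even uses exactly that choice, and
\[
\begin{Bmatrix} j_1 & j_1 & 1\\ j_2 & j_2 & 1\\ j_3 & j_3 & 1\end{Bmatrix}=0
\]
identically. Swapping the first two columns is an odd permutation with odd total $2J+3$, yet it leaves the symbol unchanged. This is also the $\lambda=\mu=\nu=0$ entry of the paper's own table, which is listed there as $0$. Physically, the $\Y^{j,1}_{j}$ fields are tangent to the sphere, so their cross product is radial and has no $\ell_3=j_3$ component.

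Rule 5 as stated excludes this case only when two $j$'s coincide. So the ``if'' direction of \autoref{thm:VSH_selection} fails for distinct $j$'s with $J$ even and $\ell_i=j_i$. The paper's argument therefore breaks in that subcase, while yours does not.

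\textbf{Your fallback check works.} Your proposed direct verification is the right safeguard, and it goes through. Under your case hypotheses, none of the three closed forms has a vanishing factor:
\begin{itemize}
\item In the $(-1,-1,0)$ entry, the prefactor $b-a=j_2-j_1$ is exactly why you need $j_1\ne j_2$.
\item In the $(-1,-1,-1)$ entry, the linear factors $s-2a$, $s-2b$, $s-2c$ equal $j_b+j_c-j_a+1\ge 1$.
\item In the $(-1,1,0)$ entry, the factor $1/(s-2a)!=1/(p-2)!$ is why you need $p\ge 2$.
\end{itemize}
The paper's labels mostly stay at or below $\max_i j_i$, so they need a slightly smaller degree cutoff than yours, which can reach $\max_i j_i+1$. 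That difference does not matter for this statement.
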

\begin{proof}[Proof sketch.]
    We perform tedious casework on the possible $\ell_i$ for each $j_i$ and show we can always find some $\ell_1,\ell_2,\ell_3$ such that $j_1,\ell_1,j_2,\ell_2,j_3,\ell_3$ satisfies the selection rules of \autoref{thm:VSH_selection}.

    Full proof is in \autoref{sec:VSH_selection_limits}.
\end{proof}

\subsection{Simulating CGTP with VSTP}
\begin{corollary}
    \label{thm:CGTP_simulate}
    Suppose we have a pair of inputs of irrep types $j_1,j_2$. We can compute $(\x^{(j_1)}\otimes \x^{(j_2)})^{(j_3)}$ with a constant number of VSTPs.
\end{corollary}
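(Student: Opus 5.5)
Fix $j_1,j_2$ and a target $j_3$ with $\{j_1,j_2,j_3\}=1$. The plan is to realize the single-path CGTP $(\x^{(j_1)}\otimes\x^{(j_2)})^{(j_3)}$ as an equivariant linear embedding, then one VSTP, then an equivariant linear projection, i.e.\ to exhibit interactability in the sense of the definition in \autoref{sec:tp}.

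\textbf{Case $j_1=j_2=j_3=0$.} Here the output is just the product of two scalars. VSTP cannot produce it, since by \autoref{thm:VSH_selection_limits} this is the excluded triple, and indeed rule 1 forces $\ell_i=1$, so $j_a=\ell_a$ fails for every $a$ and rule 5 is not the obstruction. However, this product costs $O(1)$ to compute directly. Alternatively, we can add one $(0,0,0)$ ISTP (Gaunt) call.

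\textbf{Main case.} Otherwise \autoref{thm:VSH_selection_limits} supplies $\ell_1,\ell_2,\ell_3$ such that $(j_1,\ell_1,j_2,\ell_2,j_3,\ell_3)$ satisfies the rules of \autoref{thm:VSH_selection}. Take $L_X$ to place $\x^{(j_1)}$ into the $(j_1,\ell_1)$ slot of the vector SH coefficients and set every other slot to zero. Define $L_Y$ similarly with $(j_2,\ell_2)$. These maps are equivariant because the TSH $\Y^{\ell,1}_{j,m}$ transforms as irrep $j$. Take $L_Z$ to read off the $(j_3,\ell_3)$ coefficient of the output.

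By \autoref{thm:general_gaunt}, and because only one input path is nonzero on each side, the output coefficient equals
\[c\,\begin{Bmatrix} j_1&\ell_1&1\\ j_2&\ell_2&1\\ j_3&\ell_3&1\end{Bmatrix} C^{\ell_3,0}_{\ell_1,0,\ell_2,0}\sum_{m_1,m_2}C^{j_3,m_3}_{j_1,m_1,j_2,m_2}x_{m_1}y_{m_2}.\]
Here $c>0$ is an explicit constant. The scalar prefactor is nonzero by \autoref{thm:VSH_selection}, so dividing by it (a scalar multiple is still in $\Hom_G$) yields exactly the desired CGTP component. The same conclusion follows abstractly from Schur's lemma: any nonzero equivariant bilinear map $V_{j_1}\times V_{j_2}\to V_{j_3}$ is a multiple of the CG map, because the multiplicity of $j_3$ in $j_1\otimes j_2$ is one.

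\textbf{Computing all outputs.} To obtain every $j_3$ with $|j_1-j_2|\le j_3\le j_1+j_2$, note that one VSTP call with the chosen input slots simultaneously outputs every reachable $(j_3,\ell_3)$. The main obstacle is showing that a bounded number of choices of $(\ell_1,\ell_2)$, independent of $j_1,j_2$, covers all $j_3$.

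Since $\ell_i\in\{j_i-1,j_i,j_i+1\}$, there are at most $9$ input pairs. For each $j_3$ we may pick any $\ell_3\in\{j_3-1,j_3,j_3+1\}$. The casework from the proof of \autoref{thm:VSH_selection_limits} should show that, for each $j_3$, one of these $9$ pairs works. We therefore run all $9$ pairs, and possibly put several slots in a single call, dealing with cross terms by linearity. This gives at most $9$ VSTPs plus possibly one Gaunt call for the scalar case, which is a constant number. The expected difficulty is confirming that the existence proof in \autoref{thm:VSH_selection_limits} only ever uses $\ell_i$ within distance $1$ of $j_i$. This is automatic from rule 1, so the count is bounded regardless.
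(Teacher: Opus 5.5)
Your proposal is correct and follows the same route as the paper. You handle the all-scalar case by direct multiplication. Otherwise you run at most $3\cdot 3=9$ VSTPs over the choices $\ell_i\in\{j_i-1,j_i,j_i+1\}$, which rule 1 forces, and \autoref{thm:VSH_selection_limits} guarantees that one of these reaches the target $j_3$.

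Your explicit single-slot embedding and readout, and your use of \autoref{thm:general_gaunt} to show the output is a nonzero multiple of the CG coupling, spell out what the paper's short proof leaves implicit. They also show that a single VSTP already suffices for any one fixed $j_3$.
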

\begin{proof}[Proof.]
    If $j_1=j_2=j_3=0$, then they are all scalars and we multiply using no VSTPs.

    Each input $j_i$ has up to 3 possible $\ell_i$ it can be paired with. We simply compute a VSTP for each possible combination $j_1,\ell_1$ and $j_2,\ell_2$. There are at most $3\cdot3=9$ such combinations. By \autoref{thm:VSH_selection_limits}, there is at least one $j_3,\ell_3$ that works with one of the the computed VSTPs.
\end{proof}

\section{Asymptotic runtimes and expressivity}
\label{sec:asymptotics}
\subsection{Expressivity runtime tradeoffs}
We analyzed the asymptotic runtimes and expressivity respectively in \autoref{sec:runtimes} and \autoref{sec:expressivity} of ISTP and VSTP. The results are summarized in \autoref{tab:asymptotic_runtimes_IO}. For expressivity, we assume we are using the various tensor products to construct bilinearities from space $X=Y$ transforming as $(0\oplus\ldots\oplus L)$ to space $Z$ transforming as $(0\oplus\ldots\oplus 2L)$.

\begin{table}[h]
\caption{Asymptotic runtimes and expressivity of various tensor product implementations. Note that the fast spherical transform algorithm by \citet{healy2003ffts} gives the best runtime/expressivity tradeoff ratios.}
\vspace{0.3cm}
\label{tab:asymptotic_runtimes_IO}
    \centering    
    \scalebox{0.68}{\begin{tabular}{cccc}
        \toprule
        Tensor Product & Expressivity & Runtime & Runtime / Expressivity\\
         \midrule
        CGTP (Naive) & $\O(L^3)$ & $\O(L^6)$ & $\O(L^3)$\\
        CGTP (Sparse) & $\O(L^3)$ & $\O(L^5)$ & $\O(L^2)$\\
        GTP (Fourier) & $\O(L)$ & $\O(L^3)$ & $\O(L^2)$\\
        GTP (Grid) & $\O(L)$ & $\O(L^3)$ & $\O(L^2)$\\
        GTP (Healy) & $\O(L)$ & $\O(L^2 \log^2 L)$ & $\O(L\log^2 L)$\\
        VSTP (Grid) & $\O(L)$ & $\O(L^3)$ & $\O(L^2)$\\
        VSTP (Healy) & $\O(L)$ & $\O(L^2 \log^2 L)$ & $\O(L\log^2 L)$\\
        ISTP (Grid) & $\O(\tilde{s}L)$ & $\O(\tilde{s}^2L^2+\tilde{s}L^3)$ & $\O(\tilde{s}L+L^2)$\\
        ISTP (S2FFT) & $\O(\tilde{s}L)$ & $\O(\tilde{s}^2L^2+\tilde{s}L^2\log^2 L)$ & $\O(\tilde{s}L+L\log^2 L)$\\
        \bottomrule
    \end{tabular}}
\end{table}


Note that because $s=1$ is a finite cutoff, VSTP has the same asymptotics as GTP.

\subsection{CGTP simulation}
By Corollary~\ref{thm:CGTP_simulate}, we can fully simulate CGTP for a pair of irreps with a constant number of VSTPs. In the setting where both inputs are of type $(0\oplus\ldots\oplus L)$, there are $(L+1)^2=\O(L^2)$ ways to pair input irreps. Hence, we need $\O(L^2)$ VSTP calls to simulate the corresponding CGTP. Since each VSTP call runs in $\O(L^2\log^2L)$ time, this gives $\O(L^4\log^2L)$ runtime to fully simulate CGTP.

\section{Limitations}
\label{sec:limitations}
 While our work presents the first asymptotic improvements for computing full CGTP, it is not practical for the $L$'s currently used in equivariant networks. In the spherical harmonic transform, asymptotically fast $\O(L^2\log^2L)$ versions often suffer from numerical stability compared to $\O(L^3)$ versions. In addition, runtime benefits for fast SH transforms are often only achieved for $L\sim1000$, far higher than any $L$ currently used in E(3)NNs.

However, there are many domains which use extremely high $L$. For instance, the Earth Gravitation Model uses $L\sim2000$ \citep{pavlis2008earth} and planetary topography models go up to $L\sim40,000$ \cite{rexer2015ultra}. It may be possible that the ideas in this paper will be useful in the far future for such domains.

Lastly, it is possible VSTP with $\O(L^3)$ SH transforms is viable on current equivariant networks. It would be quite similar to the already viable GTP with a constant factor increase in expressivity, no interactibility problems, but also constant factor speed decrease. However, we have not robustly tested our preliminary implementation. In addition, many common benchmarks compute quantities such as forces or energies which can already be formed from the interactions of GTP. Hence, the lost interactibility of GTP may not be an issue in such tasks.

\section{Conclusion}

In this work, we investigated the use of spherical signals to perform Clebsch-Gordan tensor products. We first explicitly drew connections to group Fourier transforms, showing how a natural extension of FFT ideas to compute convolutions leads to GTP. We then showed that by generalizing to tensor spherical signals, we can obtain new types of interactions and circumvent antisymmetry issues. We derived a generalized Gaunt formula for tensor harmonics under these interactions. Next, we showed only up to vector spherical harmonics is necessary to fully simulate CGTP. Finally, we analyzed the asymptotics of our proposed VSTP, sharing the same benefits as GTP from fast SH transforms but not suffering from interactibility issues.

For future work, we would like to robustly test the viability of VSTP in actual E(3)NNs. This may require careful initialization and normalization which our generalized Gaunt formula can help analyze. Further, we believe it will be fruitful to explore spin-weighted spherical harmonics, the other possible generalization of scalar SH.

\section*{Acknowledgments}

YuQing Xie, Ameya Daigavane, and Mit Kotak were supported by the NSF Graduate Research Fellowship program under Grant No. DGE-1745302. We were also supported by the U.S. Department of Energy, Office of Science, under Award No. DE-SC0026242, Hierarchical Representations of Complex Physical Systems with Euclidean Neural Networks and the National Science Foundation under Cooperative Agreement PHY-2019786 (The NSF AI Institute for Artificial Intelligence and Fundamental Interactions).


\section*{Impact Statement}
This paper presents work whose goal is to advance the field of 
Machine Learning. There are many potential societal consequences 
of our work, none which we feel must be specifically highlighted here.

\bibliography{bibliography}
\bibliographystyle{icml2026}

\newpage
\appendix
\onecolumn
\section*{Table of Contents}
\startcontents[sections]
\printcontents[sections]{l}{1}{\setcounter{tocdepth}{2}}
\newpage

\section{Notation}
\label{sec:notation}
Here, we present the notation we use throughout this paper and the typical variable names.
\begin{table}[h!]
    \centering
    \caption{Notation used throughout this paper}
    \vspace{0.15in}
    \begin{tabular}{rp{0.8\textwidth}}
        $SO(n)$ & Group of rotations in $n$-dimensional space\\
        $O(n)$ & Group of rotations and inversion in $n$-dimensional space\\
        $S^2$ & The 2-sphere, surface defined by $x^2+y^2+z^2=1$\\
        $Y^m_\ell$ & Spherical harmonic function of degree $\ell$ and order $m$\\
        $\Y_\ell$ & The collection of spherical harmonic functions of degree $\ell$ for all orders $m$\\
        $\Y^{\ell,s}_{j,m}$ & Tensor spherical harmonic function\\
        $\Y^{\ell,s}_{j}$ & Collection of tensor spherical harmonic functions for all $m$\\
        $\oplus$ & Denotes a direct sum\\
        $\otimes$ & Denotes a tensor product\\
        $\times$ & Denotes a Cartesian product of spaces and also denotes cross products\\
        $\O$ & Big O notation\\
        $\tosphere$ & Function which takes in spherical harmonic coefficients consisting of single copies of irrep up to some cutoff $L$ and converts it into a spherical signal $f:S^2\to\R$\\
        $\fromsphere$ & Function takes in a spherical signal $f:S^2\to\R$ and converts it to spherical harmonic coefficients consisting of single copies of irrep up to some cutoff $L$
    \end{tabular}
    \label{tab:notation}
\end{table}

\begin{table}[h!]
    \centering
    \caption{Commonly used meanings of symbols}
    \vspace{0.15in}
    \begin{tabular}{rp{0.8\textwidth}}
        $G$ & Denotes a group\\
        $\rho(g)$ & Representation of a group\\
        $A$ & First input space of a constructed bilinearity\\
        $B$ & Second input space of a constructed bilinearity\\
        $C$ & Output space of a constructed bilinearity\\
        $X$ & First input space of a tensor product operation (fixed equivariant bilinearity)\\
        $Y$ & Second input space of a tensor product operation (fixed equivariant bilinearity)\\
        $Z$ & Output space of a tensor product operation (fixed equivariant bilinearity)\\
        $T$ & Tensor product operation (fixed equivariant bilinearity $X\times Y\to Z$)\\
        $\ell$ & Typically used to denote irrep type for $SO(3)$. For spherical signals, used instead to denote spherical harmonic degree which naturally indexes multiplicities of irrep types for VSTP/ISTPs\\
        $c$ & Indexes multiplicities of an irrep type in given space (channel)\\
        $j$ & Used instead of $\ell$ to denote irrep type for VSTP and general ISTPs\\
        $s$ & Denotes irrep type of spherical signal (ie. our signal is a map $f:S^2\to\R^{2s+1}$)\\
        
    \end{tabular}
    \label{tab:symbols}
\end{table}
\section{Irreducible Representations of $E(3)$}
\label{sec:irreps}

A representation $\rho$ of a group $G$ maps each group element $g$ to a bijective linear transformation $\rho(g) \in \mathrm{GL}(V)$, where $V$ is some vector space. 
Representations must preserve the group multiplication property:
\begin{align}
\rho(g \cdot h) = \rho(g) \circ \rho(h) \quad \forall g, h \in G
\end{align}
Thus, the representation $\rho$ defines a group action on a vector space $V$. The dimension of the representation $\rho$ is simply defined as the dimension of the vector space $V$.

There may be subspaces $W\subset V$ which are left invariant under actions of $\rho(g)$ for all $g\in G$. If this is the case, then restricting to $W$ also gives a representation $\rho|_W(g)\in\mathrm{GL}(W)$. If there is no nontrivial $W$, then we say the representation $\rho$ is an irreducible representation (irrep).

To build $E(3)$-equivariant neural networks, the irreducible representations of $E(3)$ play a key role. Because $E(3)$ is not a compact group, the usual approach has been to consider irreducible representations of the group $O(3)$ of 3D rotations + inversion, and compose them with the representation in which translations act as the identity:
\begin{align}
\rho(R, T) = \rho'(R)
\end{align}
This is why translations are often handled in $E(3)$-equivariant neural networks by centering the system or only using relative vectors. Further, $O(3)\cong SO(3)\times \mathbb{Z}_2$. Hence, we label $O(3)$ irreps by $SO(3)$ irrep type and $\mathbb{Z}_2$ irrep type. The latter can be thought of as parity, whether we flip sign under inversion. Therefore $O(3)$ irreps are written as $\ell e$ or $\ell o$ where $\ell$ is a nonnegative integer and $e,o$ is even or odd parity corresponding to invariance under inversion or sign flip under inversion.

 The `scalar' representation $\rho_{\text{scalar}}$ representation of $SO(3)$ is defined as:
\begin{align}
\rho_{\text{scalar}}(R) = \mathrm{id} \quad \forall R \in SO(3)
\end{align}
and is of dimension $1$ over $V = \R$. Elements of $\R$ are unchanged by any rotation $R$. We call such elements `scalars' to indicate that they transform under the `scalar' representation of $SO(3)$. An example of a `scalar' element could be mass of an object, which does not change under rotation of coordinate frames.

Let $T(R) \in \R^{3 \times 3}$ be the rotation matrix corresponding to a rotation $R \in SO(3)$.
Then, the `vector' representation of $SO(3)$ is defined as:
\begin{align}
\rho_{\text{vector}}(R) = T(R)
\quad \forall R \in SO(3)
\end{align}
and is of dimension $3$ over $V = \R^3$.
The name arises from the way vectors in $\R^3$ transform under a rotation of the coordinate frame.
We call such elements `vectors' to indicate that they transform under the `vector' representation of $SO(3)$. For example, the velocity and position of an object in a certain coordinate frame are `vectors'.
 
Weyl's theorem for the Lie group $SO(3)$ states that all finite-dimensional representations of $SO(3)$ are equivalent to direct sums of irreducible representations. 
The irreducible representations of $SO(3)$ are indexed by an integer $\ell \geq 0$, with dimension $2\ell + 1$.
$\ell = 0$ corresponds to the `scalar' representation, while $\ell = 1$ corresponds to the `vector' representation above.
We will often use $m$, where $-\ell \leq m \leq \ell$, to index of each of the $2\ell + 1$ components.

We say that a quantity $\x \in \R^{2\ell + 1}$ is a $\ell$ irrep, if it transforms as the irreducible representation (`irrep') of $SO(3)$ indexed by $\ell$.
If $\x_1$ is a $\ell_1$ irrep and $\x_2$ is an $\ell_2$ irrep, we say that $(\x_1, \x_2)$ is a direct sum of $\ell_1$ and $\ell_2$ irreps, which we call a $(\ell_1, \ell_2)$ `rep'. Weyl's theorem states that all reps are a direct sum of $\ell_i$ irreps, possibly with repeats over $\ell_i$:
$
\x = \oplus_{\ell_i} \xl{i}
$. The multiplicity of an irrep in a rep is exactly the number of repeats.

An important lemma for constructing equivariant linear layer is Schur's lemma \citep{dresselhaus2007group}.
\begin{lemma}[Schur's Lemma]
\label{lemma:Schur}
    Suppose $V_1,V_2$ are irreps of a Lie group over any algebraically closed field (such as $SO(3)$). Let $\phi:V_1\to V_2$ be an equivariant linear map. 
    
    Then $\phi$ is either $0$ or an isomorphism.

    Further, if $V_1=V_2$ then $\phi$ is a multiple of identity.

    Finally for any two $\phi_1,\phi_2:V_1\to V_2$ we must have $\phi_1=\lambda\phi_2$.
\end{lemma}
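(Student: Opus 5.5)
The plan is the standard kernel/image argument, followed by an eigenvalue argument that uses algebraic closedness. Throughout, the irreps $V_1,V_2$ are finite dimensional (as all irreps of $SO(3)$ are), with representations $\rho_1,\rho_2$. Equivariance means $\phi\circ\rho_1(g)=\rho_2(g)\circ\phi$ for all $g\in G$.

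\emph{First claim.} We check that $\ker\phi\subseteq V_1$ and $\mathrm{im}\,\phi\subseteq V_2$ are invariant subspaces.
\begin{itemize}
\item If $\phi(v)=0$, then $\phi(\rho_1(g)v)=\rho_2(g)\phi(v)=0$, so $\ker\phi$ is invariant.
\item Similarly, $\rho_2(g)\phi(v)=\phi(\rho_1(g)v)\in\mathrm{im}\,\phi$, so $\mathrm{im}\,\phi$ is invariant.
\end{itemize}
By irreducibility of $V_1$, either $\ker\phi=V_1$, in which case $\phi=0$, or $\ker\phi=0$. In the latter case $\phi$ is injective, so $\mathrm{im}\,\phi\neq0$. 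Irreducibility of $V_2$ then forces $\mathrm{im}\,\phi=V_2$, and $\phi$ is a bijective linear map, i.e.\ an isomorphism.

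\emph{Second claim ($V_1=V_2$).} Because the field is algebraically closed and $V_1$ is finite dimensional, the characteristic polynomial of $\phi$ has a root $\lambda$. Thus $\phi$ has an eigenvalue $\lambda$ with a nonzero eigenvector. The map $\phi-\lambda\,\mathrm{id}$ is still equivariant, since $\mathrm{id}$ commutes with every $\rho_1(g)$. It has nonzero kernel, so it is not an isomorphism. By the first claim it must be $0$, hence $\phi=\lambda\,\mathrm{id}$. This is the one step where algebraic closedness is genuinely needed: over $\R$ an equivariant map need not have a real eigenvalue. The paper works with complex spherical harmonics, so this hypothesis is available.

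\emph{Third claim.} We read ``$\phi_1=\lambda\phi_2$'' as saying that the two maps are proportional; in the degenerate case we may need to swap their roles.
\begin{itemize}
\item If $\phi_2=0$, we instead write $\phi_2=0\cdot\phi_1$. The statement as literally written, with $\phi_2=0$ and $\phi_1\neq0$, is false, so this reading is necessary.
\item Otherwise, by the first claim $\phi_2$ is an isomorphism. The inverse of an equivariant bijection is equivariant, because $\phi_2^{-1}\rho_2(g)=\rho_1(g)\phi_2^{-1}$ follows by conjugating the equivariance identity. Hence $\phi_2^{-1}\circ\phi_1:V_1\to V_1$ is equivariant. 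By the second claim it equals $\lambda\,\mathrm{id}$, so $\phi_1=\lambda\phi_2$.
\end{itemize}

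The only subtle points are the existence of an eigenvalue in the second claim and the degenerate case $\phi_2=0$ in the third. The remaining steps are routine verifications.
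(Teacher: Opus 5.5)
The paper gives no proof of this lemma. It quotes it as a standard fact and cites \citet{dresselhaus2007group}, so there is no paper argument to diverge from. Your proof is the standard one and it is correct: kernel and image are subrepresentations; an eigenvalue from algebraic closure, combined with the first part, gives $\phi=\lambda\,\mathrm{id}$; and composing with $\phi_2^{-1}$ reduces the third part to the second. Both caveats you flag are genuine. Finite-dimensionality is needed for the eigenvalue step and is only implicit in the statement. The last assertion is literally false when $\phi_2=0\neq\phi_1$ (take $\phi_1=\mathrm{id}$, $\phi_2=0$), so its real content is $\dim\Hom_G(V_1,V_2)\le 1$. Physics texts of the kind cited usually argue via unitarity instead: they diagonalize the Hermitian parts of a commuting matrix and handle intertwiners through $MM^{\dagger}$. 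That route needs a unitary representation over $\CC$, whereas yours needs only finite dimension and algebraic closure, which matches the generality the statement claims. One small addition to your aside about $\R$: for $SO(3)$ nothing is lost. Its real irreps stay irreducible after complexification, so a real equivariant endomorphism is a complex scalar that must be real, and the lemma also holds in the real bases most equivariant frameworks use.
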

This tells us that to construct equivariant linear layers between reps written as a direct sum of irreps, we can only have weights between input and output irreps of the same type and that those weights must be tied together so they give multiples of the identity transformation.
\section{Spherical Harmonics}
\label{sec:spherical_harmonics}
The spherical harmonics are intimately connected to the representations of $SO(3)$ and play a key role in the Gaunt tensor product.

We define the spherical coordinates $(r, \theta, \varphi)$ as:
\begin{align}
\begin{bmatrix}
    x \\
    y \\
    z \\
\end{bmatrix} = \begin{bmatrix}
r \sin \theta \cos \varphi \\ r \sin \theta \sin \varphi \\ r \cos \theta 
\end{bmatrix}
\end{align}
for $\theta \in [0, \pi), \varphi \in [0, 2\pi)$.

The spherical harmonics $Y^m_{\ell,}$ are a set of functions $S^2 \rightarrow \R$ indexed by $(\ell, m)$, where again $\ell \geq 0, -\ell \leq m \leq \ell$. Here, $S^2 = \{(r, \theta, \phi) \ | \ r = 1\}$ denotes the unit sphere. 

Indeed, as suggested by the notation, the spherical harmonics are closely related to the irreducible representations of $SO(3)$.
Let $\Y_\ell$ be the concatenation of all $Y^m_{\ell}$ over all $m$ for a given $\ell$:
\begin{align}
    \Y_\ell(\theta, \phi) = \begin{bmatrix}
        Y_{\ell, -\ell}(\theta, \phi) \\
        Y_{\ell, -\ell + 1}(\theta, \phi) \\
        \ldots \\
        Y_{\ell, \ell}(\theta, \phi) \\
    \end{bmatrix}    
\end{align}

When we transform the inputs to $Y_\ell(\theta, \phi)$, the output transforms as a $\ell$ irrep.

The spherical harmonics satisfy orthogonality conditions:
\begin{align}
    \label{eqn:sh_ortho}
    \int_{S^2} Y^{m_1,*}_{\ell_1} \cdot Y^{m_2}_{\ell_2} \
    dS^2 = \delta_{\ell_1\ell_2}
    \delta_{m_1m_2}
\end{align}
where:
\begin{align}
    \label{eqn:s2_integral}
    \int_{S^2}
    f \cdot  g \
    dS^2 &= \int_{\theta = 0}^{\pi}
    \int_{\varphi = 0}^{2\pi}
    f(\theta, \varphi) g(\theta,\varphi)
    \sin \theta d\theta d \varphi
\end{align}
The orthogonality property allows us to treat the spherical harmonics as a basis for functions on $S^2$.
We can linearly combine the spherical harmonics using irreps to approximate arbitrary functions on the sphere.
Given a $(0, 1, \ldots, L)$ rep $\x = (\x^{(0)}, \x^{(1)}, \ldots, \x^{(L)})$, we can associate the function $f_\x: S^2 \rightarrow \R$ as:
\begin{align}
f_\x(\theta, \varphi) = \sum_{\ell = 0}^{L} 
\sum_{m = -\ell}^{\ell}
\x^{(\ell)}_{m}Y_{\ell,m}(\theta,\varphi)
\end{align}
The function $f_\x$ is uniquely determined by $\x$. In particular, by the orthogonality of the spherical harmonics (\autoref{eqn:sh_ortho}), we can recover the $\x^{(\ell)}_{m}$ component:
\begin{align}
\x^{(\ell)}_{m} &= \int_{S^2}
f_\x \cdot  Y^m_{\ell} \
dS^2
\end{align}
\section{Runtime Analysis}
\label{sec:runtimes}

Here, we provide a detailed asymptotic analysis of runtimes for different tensor products. We consider 3 different settings.
\begin{itemize}
    \item 
    \textbf{Single Input, Single Output (SISO)}:
    
    Here we are computing only one path $[\ell_1,\ell_2,\ell_3]$ where $\ell_i\in\O(L).$
    \[\ell_1\times\ell_2\to\ell_3\] 
    \item \textbf{Single Input, Multiple Output (SIMO)}:
    
    Here we fix $\ell_1,\ell_3$ but allow all possible irreps generated by the respective tensor products.
    \[\ell_1 \times \ell_2 \to Z\]
    \item \textbf{Multiple Input, Multiple Output (MIMO)}: 
    
    Here we only bound the $L$ that the tensor products use but allow full capacity for the input and output irreps. In the case of CGTP, we can have an arbitrary number of copies of each irrep but we assume we only use single copies of each irrep in the input.
    \[X\times Y\to Z\]
\end{itemize}
In the SISO and SIMO settings, the asymptotic runtimes of different tensor products are directly comparable. However, in the MIMO setting, we lose expressivity in some tensor products. This is discussed more in \autoref{sec:expressivity}. Note the MIMO setting is what one would typically want to use in practice.

\subsection{Irrep Signal Tensor Products}
\label{sec:ISTP_runtime}

Suppose we want to interpret our input irreps as coefficients for irrep signals of type $s$. Then for any given irrep of type $j$, it can be coefficients of any $\Y^{m_j}_{j,\ell,s}$ where $|j-s|\leq\ell\leq j+s$. At most there can be up to $2s+1$ choices of $\ell$. We can flip this condition and see that we also have $|\ell-s|\leq j\leq \ell+s$ so for given $\ell$ there are only up to $2s+1$ choices of input irrep $j$ which work. Hence, if we use scalar SH up to degree $L$, we can input $O(sL)$ irreps into our signal.

Next, for encoding we can first convert the input irreps into coefficients of scalar spherical harmonics. Using the definition of our tensor harmonics, we have
\begin{align*}
    \sum_{m_j}\xjl{}_{m_j}(\Y^{\ell,s}_{j,m_j})_{m_s}(\hatr)= & \sum_{m_j}\sum_{m_\ell}C^{j,m_j}_{\ell,m_\ell,s,m_s}\xjl{}_{m_j}Y^{m_\ell}_{\ell}(\hatr)\\
    = & \sum_{m_\ell}\left(\sum_{m_j}C^{j,m_j}_{\ell,m_\ell,s,m_s}\xjl{}_{m_j}\right)Y^{m_\ell}_{\ell}(\hatr)\\
    = & \sum_{m_\ell}A^{(j,\ell)}_{m_s,m_\ell}Y^{m_\ell}_{\ell}(\hatr)
\end{align*}
where $A^{(j,\ell)}_{m_s,m_\ell}$ are coefficients of the scalar SH. To compute these coefficients, naively we have to perform a summation over $m_j$ taking $\O(j)$ for each pair of $m_\ell,m_s$ giving runtime of $\O(js\ell)$. However, leveraging sparsity reduces this to $\O(s\ell)$. Finally, we can use the same fast SH transforms \citep{healy2003ffts} to convert into spherical signals for each component which takes $\O(\ell^2\log\ell)$ for single components giving $\O(s\ell^2\log\ell)=\O(sL^2\log L)$ for all components.

If we now allow all irreps, we need to compute $\O(sL)$ coefficients for $\O(s^2L^2)$ time. Next, we can compute
\[B^\ell_{m_s,m_\ell}=\sum_{j}A^{(j,\ell)}_{m_s,m_\ell}.\]
There are $\O(s)$ values of valid $j$ for given $\ell$ and each $A$ has $\O(sL)$ components for $\O(s^2L^2)$ time to compute the $B$'s. Finally, for each $m_s$ we can use the $B$'s to compute a fast reverse SH transform for the signal values for a runtime of $\O(L^2\log^2L)$ for each component. Hence we have $\O(sL^2\log^2L)$ to compute all components. This gives $\O(s^2L^2+sL^2\log^2L)$ runtime for converting all input irreps into an irrep signal on a grid.

Next, recall our grid has $\O(L^2)$ points. At each point, we perform a CGTP operation and extract an irrep $s_3$ from $s_1\otimes s_2$. From our analysis of CGTP, leveraging sparsity this takes $\O(\min(s_1s_2,s_1s_3,s_2s_3))=\O(s_1s_2s_3/\max(s_1,s_2,s_3))$ time. We do this for $\O(L^2)$ points for a total runtime of $\O(s_1s_2s_3L^2/\max(s_1,s_2,s_3))$ for the interaction.

Finally, we decompose the resulting signal back into tensor harmonic coefficients. First, we can component-wise decompose into scalar SH coefficients. For a fast SH transform, this takes $\O(L^2\log^2L)$ time for each component for a total of $\O(s_3L^2\log^2L)$. If we do this for a single $\ell$ it takes $\O(s_3L^2\log L)$. Hence, we now have some $B^{\ell}_{m_\ell,m_s}$. To extract the individual components, we use orthogonality of the Clebsch-Gordan coefficients. That is, 
\[\sum_{m_\ell,m_s}C^{j,m_j}_{\ell,m_\ell,s,m_s}C^{j,m_j'}_{\ell,m_\ell,s,m_s}=\delta_{m_j,m_j'}.\]
Hence we obtain
\[\zjl{3}_{m_{j_3}}=\sum_{m_\ell,m_s}C^{j_3,m_{j_3}}_{\ell,m_{\ell},s_3,m_s}B^{\ell}_{m_\ell,m_s}.\]
Leveraging sparsity of the Clebsch-Gordan coefficients, this takes $\O(s_3L)$ time to extract a single irrep. To extract all irreps, we just repeat giving $\O(s_3^2L^2)$ time. Hence, decoding back takes $\O(s_3^2L^2+s_3L^2\log^2L)$ time.

Letting $\tilde{s}=\max(s_1,s_2,s_3)$ we the following table which summarizes the runtimes of the various components.
\begin{table}[h]
\caption{Asymptotic runtimes of ISTPs assuming a naive grid implementation.}
\vspace{0.3cm}
\label{tab:ISTP_runtimes}
    \centering    
    \scalebox{0.75}{\begin{tabular}{cccc}
        \toprule
         & SISO & SIMO & MIMO\\
        \midrule
        Encode & $\O((s_1+s_2)L^2\log L)$ & $\O((s_1+s_2)L^2\log L)$ & $\O((s_1^2+s_2^2)L^2+(s_1+s_2)L^2\log^2L)$\\
        Interact & $\O(s_1s_2s_3L^2/\tilde{s})$ & $\O(s_1s_2s_3L^2/\tilde{s})$ & $\O(s_1s_2s_3L^2/\tilde{s})$\\
        Decode & $\O(s_3L^2\log L)$ & $\O(s_3^2L^2+s_3L^2\log^2L)$ & $\O(s_3^2L^2+s_3L^2\log^2L)$\\
        \midrule
        Total & $\O(s_1s_2s_3L^2/\tilde{s}+\tilde{s}L^2\log L)$ & $\O((s_1+s_2)L^2\log L+s_3^2L^2+s_3L^2\log^2L)$ & $\O(\tilde{s}^2L^2+\tilde{s}L^2\log^2L)$\\
        \bottomrule
    \end{tabular}}
\end{table}

Note if we do not use asymptotically fast version of SH transform but instead a $\O(L^3)$ one, we just change the $L^2\log^2L$ terms to $L^3$.

\begin{table}[h]
\caption{Asymptotic runtimes of ISTPs assuming asymptotically fast S2FFT grid implementation.}
\vspace{0.3cm}
\label{tab:ISTP_runtimes_s2fft}
    \centering    
    \scalebox{0.75}{\begin{tabular}{cccc}
        \toprule
         & SISO & SIMO & MIMO\\
        \midrule
        Encode & $\O((s_1+s_2)L^2\log L)$ & $\O((s_1+s_2)L^2\log L)$ & $\O((s_1^2+s_2^2)L^2+(s_1+s_2)L^2\log^2 L)$\\
        Interact & $\O(s_1s_2s_3L^2/\tilde{s})$ & $\O(s_1s_2s_3L^2/\tilde{s})$ & $\O(s_1s_2s_3L^2/\tilde{s})$\\
        Decode & $\O(s_3L^2\log L)$ & $\O(s_3^2L^2+s_3L^2\log^2 L)$ & $\O(s_3^2L^2+s_3L^2\log^2 L)$\\
        \midrule
        Total & $\O(s_1s_2s_3L^2/\tilde{s}+\tilde{s}L^2\log L)$ & $\O((s_1+s_2)L^2\log L+s_3^2L^2+s_3L^2\log^2 L)$ & $\O(\tilde{s}^2L^2+\tilde{s}L^2\log^2 L)$\\
        \bottomrule
    \end{tabular}}
\end{table}

We see that when the $s$'s are fixed constants, the runtimes correspond exactly to those of GTP. Hence our VSTP with $s_1=s_2=s_3$ has the same asymptotic runtime as GTP. We also note that at large $L$, MIMO scales with $\tilde{s}$. However, we can also use $\O(\tilde{s})$ more irreps so in this limit, the additional cost is balanced by performing more tensor products. However, if $L$ is small, we see that runtime scales as $\O(\tilde{s}^2L^2)$ as we increase $\tilde{s}$. Hence, it still makes sense to minimize the $\tilde{s}$ we use. Note that GTP has $\tilde{s}=0$ but prevents antisymmetric tensor products, while VSTP has $\tilde{s}=1$ and the selection rules do not prevent any tensor product paths except the trivial $0\otimes0$. Therefore VSTP with $\tilde{s}=1$ should make the most sense in practice.

\subsection{Asymptotic runtimes in different settings}
We compare our results for ISTP and VSTP with a similar table in \citet{xieprice}.
\begin{table}[h]
\caption{Asymptotic runtimes of various tensor products for different output settings. The best performing tensor products for each output settings are highlighted {\color{highlight} in green}. In the MIMO setting, the Clebsch-Gordan tensor products are highlighted {\color{warning} in red} to indicate that they can output irreps with multiplicity $ > 1$ , unlike the Gaunt tensor products.}
\vspace{0.3cm}
\label{tab:asymptotic_runtimes}
    \centering    
    \scalebox{0.7}{\begin{tabular}{cccc}
        \toprule
        Tensor Product & SISO & SIMO & MIMO \\
         \midrule
        Clebsch-Gordan (Naive) & $\O(L^3)$ & $\O(L^4)$ & $\color{warning} \O(L^6)$ \\
        Clebsch-Gordan (Sparse) & $\color{highlight} \O(L^2)$ & $\O(L^3)$ & $\color{warning} \O(L^5)$ \\
        Gaunt (Original) & $\O(L^2\log L)$ & $\O(L^3)$ & $\O(L^3)$ \\
        Gaunt (Naive Grid) & $\O(L^2\log L)$ & $\O(L^3)$ & $\O(L^3)$ \\
        Gaunt (S2FFT Grid) & $\O(L^2\log L)$ & $\color{highlight} \O(L^2\log^2 L)$ & $\color{highlight} \O(L^2 \log^2 L)$ \\
        Vector Signal (Naive Grid) & $\O(L^2\log L)$ & $\O(L^3)$ & $\O(L^3)$ \\
        Vector Signal (S2FFT) & $\O(L^2\log L)$ & $\color{highlight} \O(L^2\log^2 L)$ & $\color{highlight} \O(L^2 \log^2 L)$ \\
        ISTP (Naive grid) $(s_1,s_2,s_3)$ & $\O(s_1s_2s_3L^2/\tilde{s}+\tilde{s}L^2\log L)$ & $\O((s_1+s_2)L^2\log L+s_3^2L^2+s_3L^3)$ & $\O(\tilde{s}^2L^2+\tilde{s}L^3)$\\
        ISTP (S2FFT) $(s_1,s_2,s_3)$ & $\O(s_1s_2s_3L^2/\tilde{s}+\tilde{s}L^2\log L)$ & $\O((s_1+s_2)L^2\log L+s_3^2L^2+s_3L^2\log^2L)$ & $\O(\tilde{s}^2L^2+\tilde{s}L^2\log^2L)$\\
        \bottomrule
    \end{tabular}}
\end{table}
\section{Expressivity}
\label{sec:expressivity}

Here, we analyze the expressivity, as defined in \citet{xieprice}, of ISTP. Following their work, we assume we use a tensor product to construct bilinear maps 
\[B:(0\oplus\ldots\oplus L)\times (0\oplus\ldots\oplus L)\to (0\oplus\ldots\oplus 2L)\]
by inserting equivariant linear layers before and after the tensor product.

By Schur's lemma, we can only linear maps between irreps of the same type and these maps must be identity. Hence, the total number of inputs and output irreps to our tensor product gives the degrees of freedom for parameterizing the linear layers from $0\oplus\ldots\oplus L$ and to $0\oplus\ldots\oplus 2L$. There is an additional 2-fold redundancy in overall scaling so $\texttt{\#Input irreps}+\texttt{\#Ouput irreps}-2$ gives an upper bound on expressivity.






In the case of ISTP $(s_1,s_2,s_3)$, we note that spherical signals to vector spaces of $\R^{2s+1}$ for a given spin $s$ can be thought of as $2s+1$ copies of scalar spherical harmonics. In fact, it turns out we need $\O(sL)$ irreps to specify a irrep signal of spin $s$. Hence, the number of input irreps is $\O((s_1+s_2)L)$ and number of resulting output irreps is $\O(s_3L)$.

Since VSTP corresponds to a constant choice of $s_1,s_2,s_3$, it ends up having the same asymptotic number of input and output irreps as for GTP.

\section{Proofs}
\label{sec:proofs}
\subsection{Proof of Theorem~\ref{thm:general_gaunt}}
\label{sec:general_gaunt}
\subsubsection{Diagrams for Clebsch-Gordan couplings}
To assist with the proof, we first introduce a diagrammatic way for working with Clebsch-Gordan couplings. These are inspired by angular momentum diagrams used in quantum mechanics.

We will write a CG coefficient as a point with 3 lines labeled by irrep type and component (orbital and magnetic quantum numbers). Arrows pointing towards the point indicate a bra and arrows pointing away indicate a ket. The number of arrows will determine ordering. Because the CG coefficients are real, flipping all the arrows gives the same coefficient. An example is shown below.
\[C^{j,m_j}_{\ell,m_\ell,s,m_s}=\braket{\ell,m_\ell,s,m_s|j,m_j}=\left\{\begin{gathered}\begin{tikzpicture}
        \coordinate (l) at (0, 1);
        \coordinate (s) at (0, -1);
        \coordinate (cg) at (1, 0);
        \coordinate[right] (j) at (2, 0);

        \draw[arrow1] (l) -- (cg) node[midway, above right] {\tiny $\ell,m_\ell$};
        \draw[arrow2] (s) -- (cg) node[midway, below right] {\tiny $s,m_s$};
        \draw[arrow1] (cg) -- (j) node[midway, above] {\tiny $j,m_j$};

        \fill (cg) circle (1.5pt);
    \end{tikzpicture}
    \end{gathered}\right\}
    =\braket{j,m_j|\ell,m_\ell,s,m_s}=\left\{\begin{gathered}\begin{tikzpicture}
        \coordinate (l) at (0, 1);
        \coordinate (s) at (0, -1);
        \coordinate (cg) at (1, 0);
        \coordinate[right] (j) at (2, 0);

        \draw[arrow1] (cg) -- (l) node[midway, above right] {\tiny $\ell,m_\ell$};
        \draw[arrow2] (cg) -- (s) node[midway, below right] {\tiny $s,m_s$};
        \draw[arrow1] (j) -- (cg) node[midway, above] {\tiny $j,m_j$};

        \fill (cg) circle (1.5pt);
    \end{tikzpicture}
    \end{gathered}\right\}
\]
Note that orientation of the diagram does not matter.

Any variable attached to a leg with a dot will indicate multiplication. Dropping the irrep component (magnetic quantum number) labels means we sum over all components. We will sometimes attach a variable to a leg with no dot to refer to the output. For example

\[\sum_{m_\ell,m_s}C^{j,m_j}_{\ell,m_\ell,s,m_s}\x^{(\ell)}_{m_\ell}\y^{(s)}_{m_s}=\sum_{m_\ell,m_s}\left\{\begin{gathered}\begin{tikzpicture}
        \node (l) at (0, 1) {$\x^{(\ell)}_{m_\ell}$};
        \node (s) at (0, -1) {$\y^{(s)}_{m_s}$};
        \coordinate (cg) at (1, 0);
        \coordinate (j) at (2, 0);

        \draw[arrow1b] (l) -- (cg) node[midway, above right] {\tiny $\ell,m_\ell$};
        \draw[arrow2b] (s) -- (cg) node[midway, below right] {\tiny $s,m_s$};
        \draw[arrow1] (cg) -- (j) node[midway, above] {\tiny $j,m_j$};

        \fill (cg) circle (1.5pt);
    \end{tikzpicture}
    \end{gathered}\right\}
    =\left\{\begin{gathered}\begin{tikzpicture}
        \node (l) at (0, 1) {$\x^{(\ell)}$};
        \node (s) at (0, -1) {$\y^{(s)}$};
        \coordinate (cg) at (1, 0);
        \coordinate (j) at (2, 0);

        \draw[arrow1b] (l) -- (cg) node[midway, above right] {\tiny $\ell$};
        \draw[arrow2b] (s) -- (cg) node[midway, below right] {\tiny $s$};
        \draw[arrow1] (cg) -- (j) node[midway, above] {\tiny $j,m_j$};

        \fill (cg) circle (1.5pt);
    \end{tikzpicture}
    \end{gathered}\right\}
    =\left\{\begin{gathered}\begin{tikzpicture}
        \node (l) at (0, 1) {$\x^{(\ell)}$};
        \node (s) at (0, -1) {$\y^{(s)}$};
        \coordinate (cg) at (1, 0);
        \node[right] (j) at (2, 0) {$\mathbf{z}^{(j)}_{m_j}$};

        \draw[arrow1b] (l) -- (cg) node[midway, above right] {\tiny $\ell$};
        \draw[arrow2b] (s) -- (cg) node[midway, below right] {\tiny $s$};
        \draw[arrow1] (cg) -- (j) node[midway, above] {\tiny $j,m_j$};

        \fill (cg) circle (1.5pt);
    \end{tikzpicture}
    \end{gathered}\right\}.
\]
In the last diagram, we labeled the output with $\mathbf{z}^{(j)}_{m_j}$ so that it refers to $\mathbf{z}^{(j)}_{m_j}=\sum_{m_\ell,m_s}C^{j,m_j}_{\ell,m_\ell,s,m_s}\x^{(\ell)}_{m_\ell}\y^{(s)}_{m_s}$.

\subsubsection{Useful Lemmas}
\begin{lemma}
    \label{lemma:CG_reorder}
    \[
    \left\{\begin{gathered}\begin{tikzpicture}
        \node (x) at (0, 1) {$\x^{(j)}$};
        \node (y) at (0, -1) {$\Y_{\ell}(\hatr)$};
        \coordinate (cg) at (1, 0);
        \node[right] (s) at (2, 0) {$\mathbf{f}^{s}_{m_s}$};

        \draw[arrow1f] (cg) -- (x) node[midway, above right] {\tiny $j$};
        \draw[arrow1b] (y) -- (cg) node[midway, below right] {\tiny $\ell$};
        \draw[arrow2] (s) -- (cg) node[midway, above] {\tiny $s,m_s$};

        \fill (cg) circle (1.5pt);
    \end{tikzpicture}
    \end{gathered}\right\}=
    (-1)^{\ell}\sqrt{\frac{2j+1}{2s+1}}
    \left\{\begin{gathered}\begin{tikzpicture}
        \node (x) at (0, 1) {$\x^{(j)}$};
        \node (y) at (0, -1) {$\Y^*_{\ell}(\hatr)$};
        \coordinate (cg) at (1, 0);
        \node[right] (s) at (2, 0) {$\mathbf{f}^{s}_{m_s}$};

        \draw[arrow1b] (x) -- (cg) node[midway, above right] {\tiny $j$};
        \draw[arrow2b] (y) -- (cg) node[midway, below right] {\tiny $\ell$};
        \draw[arrow1] (cg) -- (s) node[midway, above] {\tiny $s,m_s$};

        \fill (cg) circle (1.5pt);
    \end{tikzpicture}
    \end{gathered}\right\}
\]
\end{lemma}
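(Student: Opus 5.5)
The plan is to unfold both diagrams into explicit sums of Clebsch-Gordan coefficients and then match them coefficient by coefficient, using standard symmetry relations of the CG coefficients together with the conjugation identity for spherical harmonics.

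\textbf{Step 1: read off the diagrams.} By the diagram conventions, the arrow count fixes the slot order. On the left-hand side the legs $\ell$ (one arrow) and $s$ (two arrows) point into the vertex and form the bra, while the $j$ leg points out and is the ket. Since the $j,\ell$ legs carry dots and have no component labels, they are summed. So the left-hand side is
\[\sum_{m_j,m_\ell}C^{j,m_j}_{\ell,m_\ell,s,m_s}\x^{(j)}_{m_j}Y^{m_\ell}_\ell(\hatr)=\sum_{m_j}\x^{(j)}_{m_j}(\Y^{\ell,s}_{j,m_j}(\hatr))_{m_s},\]
which is exactly the tensor harmonic signal of Definition~\ref{def:TSH}. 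On the right-hand side, $j$ (one arrow) and $\ell$ (two arrows) are incoming and $s$ is outgoing. It therefore reads
\[\sum_{m_j,m_\ell}C^{s,m_s}_{j,m_j,\ell,m_\ell}\x^{(j)}_{m_j}\bigl(Y^{m_\ell}_\ell(\hatr)\bigr)^*.\]

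\textbf{Step 2: rewrite the coefficient.} I will use two standard relations from \citet{varshalovich1988quantum}. The first is the reordering symmetry
\[C^{c,\gamma}_{a,\alpha,b,\beta}=(-1)^{a-\alpha}\sqrt{\tfrac{2c+1}{2b+1}}\,C^{b,-\beta}_{a,\alpha,c,-\gamma}.\]
The second is the sign-flip/swap symmetry $C^{c,\gamma}_{a,\alpha,b,\beta}=(-1)^{a+b-c}C^{c,-\gamma}_{a,-\alpha,b,-\beta}=(-1)^{a+b-c}C^{c,\gamma}_{b,\beta,a,\alpha}$. Combining them, the two factors $(-1)^{a+c-b}$ cancel, and I expect to obtain
\[C^{j,m_j}_{\ell,m_\ell,s,m_s}=(-1)^{\ell-m_\ell}\sqrt{\tfrac{2j+1}{2s+1}}\,C^{s,m_s}_{j,m_j,\ell,-m_\ell}.\]

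\textbf{Step 3: conjugate the harmonic and reindex.} Next I substitute $Y^{m_\ell}_\ell=(-1)^{m_\ell}\bigl(Y^{-m_\ell}_\ell\bigr)^*$, valid in the quantum-mechanical (Condon--Shortley) convention the paper uses. The phases combine to $(-1)^{\ell-m_\ell+m_\ell}=(-1)^\ell$, which is independent of $m_\ell$ and can be pulled out of the sum. Reindexing $m_\ell\to-m_\ell$ then turns the sum into exactly the right-hand-side expression of Step 1, multiplied by $(-1)^\ell\sqrt{(2j+1)/(2s+1)}$, as the lemma claims.

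\textbf{Main obstacle.} The difficulty is sign bookkeeping, not any conceptual step. The phase depends on which CG symmetry is used, on the slot order encoded by the arrow counts, and on the sign convention for $Y^*$. I would therefore fix one explicit form of the symmetry relation and check it on a small case, for example $\ell=s=1$, $j=0$, before trusting the general phase $(-1)^{\ell}$. Finally, the selection rule $m_j=m_\ell+m_s$ on the left becomes $m_s=m_j+(-m_\ell)$ on the right, which confirms that the reindexing is consistent.
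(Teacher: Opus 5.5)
Your proposal is correct and follows essentially the same route as the paper: expand the left diagram as $\sum_{m_j,m_\ell}C^{j,m_j}_{\ell,m_\ell,s,m_s}\x^{(j)}_{m_j}Y^{m_\ell}_\ell(\hatr)$, apply the symmetry $C^{j,m_j}_{\ell,m_\ell,s,m_s}=(-1)^{\ell-m_\ell}\sqrt{\tfrac{2j+1}{2s+1}}\,C^{s,m_s}_{j,m_j,\ell,-m_\ell}$, and finish with $(-1)^{m_\ell}Y^{-m_\ell}_\ell=Y^{m_\ell,*}_\ell$ and the reindexing $m_\ell\to-m_\ell$. Your derivation of that symmetry from the two standard relations checks out (the two $(-1)^{\ell+j-s}$ phases cancel), and your explicit reindexing is cleaner than the paper's first substituted line, which pairs $Y^{-m_\ell}_\ell$ with $C^{s,m_s}_{j,m_j,\ell,-m_\ell}$ before reindexing.
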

\begin{proof}
    The left represents $\sum_{m_\ell,m_j}C^{j,m_j}_{\ell,m_\ell,s,m_s}\Y^{m_\ell}_\ell(\hatr)\x^{(j)}_{m_j}$. By symmetry relations of the Clebsch-Gordan coefficients, we have
    \[C^{j,m_j}_{\ell,m_\ell,s,m_s}=(-1)^{\ell-m_\ell}\sqrt{\frac{2j+1}{2s+1}}C^{s,m_s}_{j,m_j,\ell,-m_\ell}.\]
    Substituting, we find
    \begin{align*}
        \sum_{m_\ell,m_j}C^{j,m_j}_{\ell,m_\ell,s,m_s}\Y^{m_\ell}_\ell(\hatr)\x^{(j)}_{m_j} = & \sum_{m_\ell,m_j}(-1)^{\ell-m_\ell}\sqrt{\frac{2j+1}{2s+1}}C^{s,m_s}_{j,m_j,\ell,-m_\ell}\Y^{-m_\ell}_\ell(\hatr)\x^{(j)}_{m_j}\\
        = & \sum_{m_\ell,m_j}(-1)^{\ell}\sqrt{\frac{2j+1}{2s+1}}C^{s,m_s}_{j,m_j,\ell,m_\ell}(-1)^{-m_\ell}\Y^{-m_\ell}_\ell(\hatr)\x^{(j)}_{m_j}\\
        = & \sum_{m_\ell,m_j}(-1)^{\ell}\sqrt{\frac{2j+1}{2s+1}}C^{s,m_s}_{j,m_j,\ell,m_\ell}\Y^{m_\ell*}_\ell(\hatr)\x^{(j)}_{m_j}
    \end{align*}
    where we used the identity $(-1)^{-m_\ell}\Y^{-m_\ell}_\ell(\hatr)=\Y^{m_\ell*}_\ell(\hatr)$. The diagram on the right corresponds exactly to the final term, proving the equality.
\end{proof}
\begin{lemma}
    \label{lemma:gaunt_diagram}
    \[
    \left\{\begin{gathered}\begin{tikzpicture}
        \node (y1) at (0, 1) {$\Y^*_{\ell_1}(\hatr)$};
        \node (y2) at (0, -1) {$\Y^*_{\ell_2}(\hatr)$};
        \coordinate (cg) at (1, 0);
        \coordinate (y3) at (2, 0);

        \draw[arrow1b] (y1) -- (cg) node[midway, above right] {\tiny $\ell_1$};
        \draw[arrow2b] (y2) -- (cg) node[midway, below right] {\tiny $\ell_2$};
        \draw[arrow1] (cg) -- (y3) node[midway, above] {\tiny $\ell_3$};

        \fill (cg) circle (1.5pt);
    \end{tikzpicture}
    \end{gathered}\right\}=\sqrt{\frac{(2\ell_1+1)(2\ell_2+1)}{4\pi(2\ell_3+1)}}C^{\ell_3,0}_{\ell_1,0,\ell_2,0}\Y^*_{\ell_3}(\hatr)
\]
\end{lemma}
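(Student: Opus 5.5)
The plan is to derive the identity from the product formula for Wigner D matrices, \eqref{eqn:Wig_D_prod}, specialized to the second index equal to zero. The left-hand side is
\[\sum_{m_1,m_2}C^{\ell_3,m_3}_{\ell_1,m_1,\ell_2,m_2}Y^{m_1,*}_{\ell_1}(\hatr)Y^{m_2,*}_{\ell_2}(\hatr).\]
I will first expand the product $Y^{m_1,*}_{\ell_1}Y^{m_2,*}_{\ell_2}$ into conjugated spherical harmonics. I will then contract with the outer Clebsch--Gordan coefficient and collapse the sum using Clebsch--Gordan orthogonality.

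\emph{Step 1 (pointwise product).} Since $SO(3)$ acts transitively on $S^2$, every $\hatr$ can be written as $g\hat z$ for some $g\in SO(3)$. I use the relation
\[D^{(\ell)}_{m,0}(g)=\sqrt{4\pi/(2\ell+1)}\,Y^{m,*}_\ell(g\hat z).\]
Setting $n_1=n_2=0$ in \eqref{eqn:Wig_D_prod} and substituting this relation for all three $D$'s gives
\[Y^{m_1,*}_{\ell_1}(\hatr)Y^{m_2,*}_{\ell_2}(\hatr)=\sum_{\ell}\sqrt{\frac{(2\ell_1+1)(2\ell_2+1)}{4\pi(2\ell+1)}}\,C^{\ell,m_1+m_2}_{\ell_1,m_1,\ell_2,m_2}C^{\ell,0}_{\ell_1,0,\ell_2,0}\,Y^{m_1+m_2,*}_{\ell}(\hatr).\]
I will redo this substitution carefully rather than copy the display in the main text, which has typos: it repeats the $\ell_1$ factor and places the $\ell_3$-dependent normalization outside the sum. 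In the correct version the factor $1/\sqrt{2\ell+1}$ sits inside the sum over $\ell$. The result depends only on $\hatr$, so the choice of $g$ does not matter.

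\emph{Step 2 (contraction).} I multiply by $C^{\ell_3,m_3}_{\ell_1,m_1,\ell_2,m_2}$ and sum over $m_1,m_2$. Clebsch--Gordan coefficients vanish unless $m_1+m_2=m_3$, and in the expansion the summation index $m$ of the intermediate irrep is fixed to $m_1+m_2$. The orthogonality relation
\[\sum_{m_1,m_2}C^{\ell_3,m_3}_{\ell_1,m_1,\ell_2,m_2}C^{\ell,m}_{\ell_1,m_1,\ell_2,m_2}=\delta_{\ell\ell_3}\delta_{m m_3}\]
(the same one used in \autoref{sec:ISTP_runtime}) then kills every term except $\ell=\ell_3$. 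What remains is
\[\sqrt{\frac{(2\ell_1+1)(2\ell_2+1)}{4\pi(2\ell_3+1)}}\,C^{\ell_3,0}_{\ell_1,0,\ell_2,0}\,Y^{m_3,*}_{\ell_3}(\hatr),\]
which is exactly the claimed diagram identity, with the free $\ell_3,m_3$ leg as output.

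\emph{Main obstacle.} The hard part is bookkeeping of conventions rather than any deep step. I need to confirm that the $D^{(\ell)}_{m,0}$ relation, with its complex conjugate and phase conventions, matches the Condon--Shortley spherical harmonics used in Lemma~\ref{lemma:CG_reorder}. I also need to confirm that \eqref{eqn:Wig_D_prod} is stated with real Clebsch--Gordan coefficients, so that no conjugation of the coefficients appears when summing. If a phase mismatch shows up, I would fall back on a direct derivation: apply the standard Gaunt integral $\int Y^{m_1}_{\ell_1}Y^{m_2}_{\ell_2}Y^{m,*}_{\ell}$ to the product of two conjugated harmonics, expand that product in the basis $\{Y^{m,*}_\ell\}$ using orthonormality \eqref{eqn:sh_ortho}, and then carry out the same Clebsch--Gordan contraction as in Step 2.
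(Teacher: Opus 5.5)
Your proposal is correct and is essentially the paper's proof: expand $Y^{m_1,*}_{\ell_1}Y^{m_2,*}_{\ell_2}$ with the (conjugated) Gaunt formula, then contract with $C^{\ell_3,m_3}_{\ell_1,m_1,\ell_2,m_2}$ and collapse the sum by Clebsch--Gordan orthogonality. The only difference is that you rederive the Gaunt expansion from \eqref{eqn:Wig_D_prod} instead of citing it, and you keep the $1/\sqrt{2\ell+1}$ factor inside the sum; the paper's displayed expansion writes $\ell_3$ where $\ell_3'$ is meant, which becomes harmless once the Kronecker delta sets $\ell_3'=\ell_3$.
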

\begin{proof}
    The diagram represents
    \[\sum_{m_1,m_2}C^{\ell_3,m_3}_{\ell_1,m_1,\ell_2,m_2}\Y^{m_1,*}_{\ell_1}(\hatr)\Y^{m_2,*}_{\ell_2}(\hatr).\]
    By the Gaunt formula, we have
    \[\Y^*_{\ell_1}(\hatr)\Y^{m_2,*}_{\ell_2}(\hatr)=\sum_{\ell_3',m_3'}\sqrt{\frac{(2\ell_1+1)(2\ell_2+1)}{4\pi(2\ell_3+1)}}C^{\ell_3',0}_{\ell_1,0,\ell_2,0}C^{\ell_3',m_3'}_{\ell_1,m_1,\ell_2,m_2}\Y^{m_3',*}_{\ell_3'}(\hatr).\]
    Substituting and using an orthogonality relation for the Clebsch-Gordan coefficients, we get
    \begin{align*}
        \sum_{m_1,m_2}C^{\ell_3,m_3}_{\ell_1,m_1,\ell_2,m_2} & \Y^{m_1,*}_{\ell_1}(\hatr)\Y^{m_2,*}_{\ell_2}(\hatr)\\
        = & \sum_{m_1,m_2}C^{\ell_3,m_3}_{\ell_1,m_1,\ell_2,m_2}\sum_{\ell_3',m_3'}\sqrt{\frac{(2\ell_1+1)(2\ell_2+1)}{4\pi(2\ell_3+1)}}C^{\ell_3',0}_{\ell_1,0,\ell_2,0}C^{\ell_3',m_3'}_{\ell_1,m_1,\ell_2,m_2}\Y^{m_3',*}_{\ell_3'}(\hatr)\\
        = & \sum_{\ell_3',m_3'}\sum_{m_1,m_2}C^{\ell_3,m_3}_{\ell_1,m_1,\ell_2,m_2}C^{\ell_3',m_3'}_{\ell_1,m_1,\ell_2,m_2}\sqrt{\frac{(2\ell_1+1)(2\ell_2+1)}{4\pi(2\ell_3+1)}}C^{\ell_3',0}_{\ell_1,0,\ell_2,0}\Y^{m_3',*}_{\ell_3'}(\hatr)\\
        = & \sum_{\ell_3',m_3'}\delta_{\ell_3,\ell_3'}\delta_{m_3,m_3'}\sqrt{\frac{(2\ell_1+1)(2\ell_2+1)}{4\pi(2\ell_3+1)}}C^{\ell_3',0}_{\ell_1,0,\ell_2,0}\Y^{m_3',*}_{\ell_3'}(\hatr)\\
        = & \sqrt{\frac{(2\ell_1+1)(2\ell_2+1)}{4\pi(2\ell_3+1)}}C^{\ell_3,0}_{\ell_1,0,\ell_2,0}\Y^*_{\ell_3}(\hatr)
    \end{align*}
    as desired.

    Diagrammatically, this computation looks like
    \begin{align*}
        \left\{\begin{gathered}\begin{tikzpicture}
        \node (y1) at (0, 1) {$\Y^*_{\ell_1}(\hatr)$};
        \node (y2) at (0, -1) {$\Y^*_{\ell_2}(\hatr)$};
        \coordinate (cg) at (1, 0);
        \coordinate (y3) at (2, 0);
%
        \draw[arrow1b] (x) -- (cg) node[midway, above right] {\tiny $\ell_1$};
        \draw[arrow2b] (y) -- (cg) node[midway, below right] {\tiny $\ell_2$};
        \draw[arrow1] (cg) -- (y3) node[midway, above] {\tiny $\ell_3$};
        \fill (cg) circle (1.5pt);
    \end{tikzpicture}
    \end{gathered}\right\}= & \sqrt{\frac{(2\ell_1+1)(2\ell_2+1)}{4\pi(2\ell_3+1)}}\left\{\begin{gathered}\begin{tikzpicture}
        \coordinate (cg1) at (1, 0);
        \node (y3p) at (0, 0) {$\Y^*_{\ell_3'}(\hatr)$};
        \coordinate (y1) at (2, 1);
        \coordinate (y2) at (2, -1);
        \coordinate (cg2) at (3, 0);
        \coordinate (y3) at (4, 0);
%
        \draw[arrow1b] (y1) -- (cg2) node[midway, above right] {\tiny $\ell_1$};
        \draw[arrow2b] (y2) -- (cg2) node[midway, below right] {\tiny $\ell_2$};
        \draw[arrow1] (cg2) -- (y3) node[midway, above] {\tiny $\ell_3$};
        \draw[arrow1] (cg1) -- (y1) node[midway, above] {\tiny $\ell_1$};
        \draw[arrow2] (cg1) -- (y2) node[midway, above] {\tiny $\ell_2$};
        \draw[arrow1b] (y3p) -- (cg1) node[midway, above] {\tiny $\ell_3'$};
        \fill (cg1) circle (1.5pt);
        \fill (cg2) circle (1.5pt);
    \end{tikzpicture}
    \end{gathered}\right\}\\
    = & \sqrt{\frac{(2\ell_1+1)(2\ell_2+1)}{4\pi(2\ell_3+1)}}C^{\ell_3,0}_{\ell_1,0,\ell_2,0}\Y^*_{\ell_3}(\hatr)
    \end{align*}
    where the orthogonality relation just corresponds to collapse of the final diagram.
\end{proof}

\subsubsection{Coupling of 4 momenta and Wigner 9j}
Suppose we have $4$ irreps $j_1,\ell_1,j_2,\ell_2$. We can create a tensor product space from these which can be decomposed into irreps. However, there is now ambiguity in how we decompose. We can first couple $j_1,\ell_1$ and decompose into irreps $s_1$, couple $j_2,\ell_2$ and decompose into irreps $s_2$, before finally coupling $s_1,s_2$ and decomposing to final irrep $s_3$. Our basis irreps would then be labeled
\[\ket{(j_1 \ell_1)s_1,(j_2 \ell_2)s_2,(s_1 s_2)s_3}.\]
Conversion from the tensor product space can diagrammatically be drawn as
\begin{align*}
& \braket{j_1,m_{j_1},\ell_1,m_{\ell_1},j_2,m_{j_2},\ell_2,m_{\ell_2}|(j_1 \ell_1)s_1,(j_2 \ell_2)s_2,(s_1 s_2)s_3,m_{s_3}}\\
    & \qquad\qquad\qquad\qquad\qquad\qquad = \left\{\begin{gathered}\begin{tikzpicture}
        \coordinate (x1) at (0, 2.5) ;
        \coordinate (y1) at (0, 0.5) ;
        \coordinate (cg1) at (1, 1.5);
        \coordinate (s1) at (2, 1.5);
        \coordinate (x2) at (0, -0.5);
        \coordinate (y2) at (0, -2.5);
        \coordinate (cg2) at (1, -1.5);
        \coordinate (s2) at (2, -1.5);
        \coordinate (cg3) at (4,0);
        \coordinate (s3) at (5,0);
    %
        \draw[arrow1] (x1) -- (cg1) node[midway, above right] {\tiny $j_1,m_{j_1}$};
        \draw[arrow2] (y1) -- (cg1) node[midway, below right] {\tiny $\ell_1,m_{\ell_1}$};
        \draw[arrow1f] (cg1) -- (s1) node[midway, above] {\tiny $s_1$};
        \draw[arrow1] (x2) -- (cg2) node[midway, above right] {\tiny $j_2,m_{j_2}$};
        \draw[arrow2] (y2) -- (cg2) node[midway, below right] {\tiny $\ell_2,m_{\ell_2}$};
        \draw[arrow1f] (cg2) -- (s2) node[midway, above] {\tiny $s_2$};
        \draw[arrow1] (s1) -- (cg3) node[midway, above right] {\tiny $s_1$};
        \draw[arrow2] (s2) -- (cg3) node[midway, below right] {\tiny $s_2$};
        \draw[arrow1] (cg3) -- (s3) node[midway, above] {\tiny $s_3,m_{s_3}$};
        \fill (cg1) circle (1.5pt);
        \fill (cg2) circle (1.5pt);
        \fill (cg3) circle (1.5pt);
    \end{tikzpicture}
    \end{gathered}\right\}.
\end{align*}
Alternatively, we could first couple $j_1,j_2$ to form $j_3$, couple $\ell_1,\ell_2$ to form $\ell_3$, before finally coupling $j_3,\ell_3$ to form $s_3$. This is also a perfectly valid choice of basis which we write as.
\[\ket{(j_1 j_2)j_3,(\ell_1 \ell_2)\ell_3,(j_3 \ell_3)s_3}\]
Similarly, we write conversion from the tensor product space diagrammatically as
\begin{align*}
& \braket{j_1,m_{j_1},\ell_1,m_{\ell_1},j_2,m_{j_2},\ell_2,m_{\ell_2}|(j_1 j_2)j_3,(\ell_1 \ell_2)\ell_3,(j_3 \ell_3)s_3}\\
    & \qquad\qquad\qquad\qquad\qquad\qquad = \left\{\begin{gathered}\begin{tikzpicture}
        \coordinate (x1) at (0, 2.5) ;
        \coordinate (y1) at (0, 0.5) ;
        \coordinate (cg1) at (1, 1.5);
        \coordinate (s1) at (2, 1.5);
        \coordinate (x2) at (0, -0.5);
        \coordinate (y2) at (0, -2.5);
        \coordinate (cg2) at (1, -1.5);
        \coordinate (s2) at (2, -1.5);
        \coordinate (cg3) at (4,0);
        \coordinate (s3) at (5,0);
    %
        \draw[arrow1] (x1) -- (cg1) node[midway, above right] {\tiny $j_1,m_{j_1}$};
        \draw[arrow2] (y1) -- (cg1) node[midway, below right] {\tiny $j_2,m_{j_2}$};
        \draw[arrow1f] (cg1) -- (s1) node[midway, above] {\tiny $j_3$};
        \draw[arrow1] (x2) -- (cg2) node[midway, above right] {\tiny $\ell_1,m_{\ell_1}$};
        \draw[arrow2] (y2) -- (cg2) node[midway, below right] {\tiny $\ell_2,m_{\ell_2}$};
        \draw[arrow1f] (cg2) -- (s2) node[midway, above] {\tiny $\ell_3$};
        \draw[arrow1] (s1) -- (cg3) node[midway, above right] {\tiny $j_3$};
        \draw[arrow2] (s2) -- (cg3) node[midway, below right] {\tiny $\ell_3$};
        \draw[arrow1] (cg3) -- (s3) node[midway, above] {\tiny $s_3,m_{s_3}$};
        \fill (cg1) circle (1.5pt);
        \fill (cg2) circle (1.5pt);
        \fill (cg3) circle (1.5pt);
    \end{tikzpicture}
    \end{gathered}\right\}
\end{align*}

The Wigner 9j symbol then tells us how to transform between these basis choices.
\begin{definition}[Wigner 9j symbol]
    \label{def:9j}
    We denote a Wigner 9j symbol as $3\times3$ matrix in curly braces $\begin{Bmatrix}
        j_1 & \ell_1 & s_1\\
        j_2 & \ell_2 & s_2\\
        j_3 & \ell_3 & s_3
    \end{Bmatrix}$. It is defined such that
    \[\sqrt{(2s_1+1)(2s_2+1)(2j_3+1)(2\ell_3+1)}\begin{Bmatrix}
        j_1 & \ell_1 & s_1\\
        j_2 & \ell_2 & s_2\\
        j_3 & \ell_3 & s_3
    \end{Bmatrix}=\braket{(j_1 \ell_1)s_1,(j_2 \ell_2)s_2,(s_1 s_2)s_3|(j_1 j_2)j_3,(\ell_1 \ell_2)\ell_3,(j_3 \ell_3)s_3}.\]
\end{definition}
In particular, we can insert the identity to see that
\begin{align*}
    & \ket{(j_1 \ell_1)s_1,(j_2 \ell_2)s_2,(s_1 s_2)s_3}\\
    & \qquad\qquad = \sum_{j_3,\ell_3}\ket{(j_1 j_2)j_3,(\ell_1 \ell_2)\ell_3,(j_3 \ell_3)s_3}\braket{(j_1 j_2)j_3,(\ell_1 \ell_2)\ell_3,(j_3 \ell_3)s_3|(j_1 \ell_1)s_1,(j_2 \ell_2)s_2,(s_1 s_2)s_3}\\
    & \qquad\qquad = \sum_{j_3,\ell_3}\sqrt{(2s_1+1)(2s_2+1)(2j_3+1)(2\ell_3+1)}\begin{Bmatrix}
        j_1 & \ell_1 & s_1\\
        j_2 & \ell_2 & s_2\\
        j_3 & \ell_3 & s_3
    \end{Bmatrix}\ket{(j_1 j_2)j_3,(\ell_1 \ell_2)\ell_3,(j_3 \ell_3)s_3}
\end{align*}
We can contract the above with $\bra{j_1,m_{j_1},\ell_1,m_{\ell_1},j_2,m_{j_2},\ell_2,m_{\ell_2}}$. Dropping the $m$'s for brevity, this gives the following relation between diagrams
\begin{align}
    & \left\{\begin{gathered}\begin{tikzpicture}
        \coordinate (x1) at (0, 2.5) ;
        \coordinate (y1) at (0, 0.5) ;
        \coordinate (cg1) at (1, 1.5);
        \coordinate (s1) at (2, 1.5);
        \coordinate (x2) at (0, -0.5);
        \coordinate (y2) at (0, -2.5);
        \coordinate (cg2) at (1, -1.5);
        \coordinate (s2) at (2, -1.5);
        \coordinate (cg3) at (4,0);
        \coordinate (s3) at (5,0);
    %
        \draw[arrow1] (x1) -- (cg1) node[midway, above right] {\tiny $j_1$};
        \draw[arrow2] (y1) -- (cg1) node[midway, below right] {\tiny $\ell_1$};
        \draw[arrow1f] (cg1) -- (s1) node[midway, above] {\tiny $s_1$};
        \draw[arrow1] (x2) -- (cg2) node[midway, above right] {\tiny $j_2$};
        \draw[arrow2] (y2) -- (cg2) node[midway, below right] {\tiny $\ell_2$};
        \draw[arrow1f] (cg2) -- (s2) node[midway, above] {\tiny $s_2$};
        \draw[arrow1] (s1) -- (cg3) node[midway, above right] {\tiny $s_1$};
        \draw[arrow2] (s2) -- (cg3) node[midway, below right] {\tiny $s_2$};
        \draw[arrow1] (cg3) -- (s3) node[midway, above] {\tiny $s_3$};
        \fill (cg1) circle (1.5pt);
        \fill (cg2) circle (1.5pt);
        \fill (cg3) circle (1.5pt);
    \end{tikzpicture}
    \end{gathered}\right\}\nonumber\\
    & \qquad\qquad\qquad\qquad = \sum_{j_3,\ell_3}\sqrt{(2s_1+1)(2s_2+1)(2j_3+1)(2\ell_3+1)}\begin{Bmatrix}
        j_1 & \ell_1 & s_1\\
        j_2 & \ell_2 & s_2\\
        j_3 & \ell_3 & s_3
    \end{Bmatrix}\nonumber\\
    & \qquad\qquad\qquad\qquad\qquad\qquad\times \quad \left\{\begin{gathered}\begin{tikzpicture}
        \coordinate (x1) at (0, 2.5) ;
        \coordinate (y1) at (0, 0.5) ;
        \coordinate (cg1) at (1, 1.5);
        \coordinate (s1) at (2, 1.5);
        \coordinate (x2) at (0, -0.5);
        \coordinate (y2) at (0, -2.5);
        \coordinate (cg2) at (1, -1.5);
        \coordinate (s2) at (2, -1.5);
        \coordinate (cg3) at (4,0);
        \coordinate (s3) at (5,0);
    %
        \draw[arrow1] (x1) -- (cg1) node[midway, above right] {\tiny $j_1$};
        \draw[arrow2] (y1) -- (cg1) node[midway, below right] {\tiny $j_2$};
        \draw[arrow1f] (cg1) -- (s1) node[midway, above] {\tiny $j_3$};
        \draw[arrow1] (x2) -- (cg2) node[midway, above right] {\tiny $\ell_1$};
        \draw[arrow2] (y2) -- (cg2) node[midway, below right] {\tiny $\ell_2$};
        \draw[arrow1f] (cg2) -- (s2) node[midway, above] {\tiny $\ell_3$};
        \draw[arrow1] (s1) -- (cg3) node[midway, above right] {\tiny $j_3$};
        \draw[arrow2] (s2) -- (cg3) node[midway, below right] {\tiny $\ell_3$};
        \draw[arrow1] (cg3) -- (s3) node[midway, above] {\tiny $s_3$};
        \fill (cg1) circle (1.5pt);
        \fill (cg2) circle (1.5pt);
        \fill (cg3) circle (1.5pt);
    \end{tikzpicture}
    \end{gathered}\right\}.
    \label{eqn:9j_diagram}
\end{align}

\subsubsection{Proof of \autoref{thm:general_gaunt}}
\begin{proof}
Suppose we have input $\x^{(j,\ell)}_m$ which we couple with TSH $\Y^{\ell,s}_{j,m}$. Then by Definition~\ref{def:TSH}, we have
\begin{equation*}
    \sum_{m}\x^{(j,\ell)}_m(\Y^{\ell,s}_{j,m}(\hatr))_{m_s} = \sum_m\x^{(j,\ell)}_m\sum_{m_\ell}C^{j,m}_{\ell,m_\ell,s,m_s}\Y^m_{\ell}(\hatr)
    = \left\{\begin{gathered}\begin{tikzpicture}
        \node (x) at (0, 1) {$\x^{(j,\ell)}$};
        \node (y) at (0, -1) {$\Y_{\ell}(\hatr)$};
        \coordinate (cg) at (1, 0);
        \coordinate (s) at (2, 0);
%
        \draw[arrow1f] (cg) -- (x) node[midway, above right] {\tiny $j$};
        \draw[arrow1b] (y) -- (cg) node[midway, below right] {\tiny $\ell$};
        \draw[arrow2] (s) -- (cg) node[midway, above] {\tiny $s,m_s$};
        \fill (cg) circle (1.5pt);
    \end{tikzpicture}
    \end{gathered}\right\}.
\end{equation*}
By Lemma~\ref{lemma:CG_reorder}, we can flip arrow directions of this diagram to obtain
\begin{equation}
    \sum_{m}\x^{(j,\ell)}_m\Y^{\ell,s}_{j,m}(\hatr)
    = 
    (-1)^{\ell}\sqrt{\frac{2j+1}{2s+1}}
    \left\{\begin{gathered}\begin{tikzpicture}
        \node (x) at (0, 1) {$\x^{(j,\ell)}$};
        \node (y) at (0, -1) {$\Y^*_{\ell}(\hatr)$};
        \coordinate (cg) at (1, 0);
        \coordinate (s) at (2, 0);

        \draw[arrow1b] (x) -- (cg) node[midway, above right] {\tiny $j$};
        \draw[arrow2b] (y) -- (cg) node[midway, below right] {\tiny $\ell$};
        \draw[arrow1] (cg) -- (s) node[midway, above] {\tiny $s$};

        \fill (cg) circle (1.5pt);
    \end{tikzpicture}
    \end{gathered}\right\}
    \label{eqn:flipped_TSH_diagram}
\end{equation}

Suppose we have inputs $\x^{(j_1,\ell_1)}_1$ and $\x^{(j_2,\ell_2)}_2$ which we couple with TSH $\Y^{\ell_1,s_1}_{j_1,m_1}$ and $\Y^{\ell_2,s_2}_{j_2,m_2}$ respectively. We can diagrammatically write
\[\left(\sum_{m_1}\x^{(j_1,\ell_1)}_{1,m_1}\Y^{\ell_1,s_1}_{j_1,m_1}(\hatr)\otimes\sum_{m_2}\x^{(j_2,\ell_2)}_{2,m_2}\Y^{\ell_2,s_2}_{j_2,m_2}(\hatr)\right)^{s_3} = 
    \left\{\begin{gathered}\begin{tikzpicture}
        \node[left] (x) at (0, 1) {$\sum_{m_1}\x^{(j_1,\ell_1)}_{1,m_1}\Y^{\ell_1,s_1}_{j_1,m_1}(\hatr)$};
        \node[left] (y) at (0, -1) {$\sum_{m_2}\x^{(j_2,\ell_2)}_{2,m_2}\Y^{\ell_2,s_2}_{j_2,m_2}(\hatr)$};
        \coordinate (cg) at (1, 0);
        \coordinate (s) at (2, 0);

        \draw[arrow1b] (x) -- (cg) node[midway, above right] {\tiny $s_1$};
        \draw[arrow2b] (y) -- (cg) node[midway, below right] {\tiny $s_2$};
        \draw[arrow1] (cg) -- (s) node[midway, above] {\tiny $s_3$};

        \fill (cg) circle (1.5pt);
    \end{tikzpicture}
    \end{gathered}\right\}
\]
We can then expand the summations using \eqref{eqn:flipped_TSH_diagram}. This gives the following diagram.

\begin{align*}
    & \left(\sum_{m_1}\x^{(j_1,\ell_1)}_{1,m_1}\Y^{\ell_1,s_1}_{j_1,m_1}(\hatr)\otimes\sum_{m_2}\x^{(j_2,\ell_2)}_{2,m_2}\Y^{\ell_2,s_2}_{j_2,m_2}(\hatr)\right)^{s_3}\\
    & \qquad\qquad = 
    (-1)^{\ell_1}\sqrt{\frac{2j_1+1}{2s_1+1}}(-1)^{\ell_2}\sqrt{\frac{2j_2+1}{2s_2+1}}\left\{\begin{gathered}\begin{tikzpicture}
    \node (x1) at (0, 2.5) {$\x^{(j_1,\ell_1)}_{1}$};
    \node (y1) at (0, 0.5) {$\Y^*_{\ell_1}(\hatr)$};
    \coordinate (cg1) at (1, 1.5);
    \coordinate (s1) at (2, 1.5);
    \node (x2) at (0, -0.5) {$\x^{(j_2,\ell_2)}_{2}$};
    \node (y2) at (0, -2.5) {$\Y^*_{\ell_2}(\hatr)$};
    \coordinate (cg2) at (1, -1.5);
    \coordinate (s2) at (2, -1.5);
    \coordinate (cg3) at (4,0);
    \coordinate (s3) at (5,0);
%
    \draw[arrow1b] (x1) -- (cg1) node[midway, above right] {\tiny $j_1$};
    \draw[arrow2b] (y1) -- (cg1) node[midway, below right] {\tiny $\ell_1$};
    \draw[arrow1f] (cg1) -- (s1) node[midway, above] {\tiny $s_1$};
    \draw[arrow1b] (x2) -- (cg2) node[midway, above right] {\tiny $j_2$};
    \draw[arrow2b] (y2) -- (cg2) node[midway, below right] {\tiny $\ell_2$};
    \draw[arrow1f] (cg2) -- (s2) node[midway, above] {\tiny $s_2$};
    \draw[arrow1] (s1) -- (cg3) node[midway, above right] {\tiny $s_1$};
    \draw[arrow2] (s2) -- (cg3) node[midway, below right] {\tiny $s_2$};
    \draw[arrow1] (cg3) -- (s3) node[midway, above] {\tiny $s_3$};
    \fill (cg1) circle (1.5pt);
    \fill (cg2) circle (1.5pt);
    \fill (cg3) circle (1.5pt);
\end{tikzpicture}
\end{gathered}\right\}.
\end{align*}

It is now apparent that we have the coupling structure of $4$ irreps. In particular, we first couple our input and spherical harmonic pairs ($j_i,\ell_i$) to get our irrep signals ($s_i$) before coupling the signals ($s_1,s_2$). Alternatively we could have first coupled the input irreps and the spherical harmonics. Using \autoref{eqn:9j_diagram}, we can rewrite using this alternative coupling as
\begin{align*}
    & \left(\sum_{m_1}\x^{(j_1,\ell_1)}_{1,m_1}\Y^{\ell_1,s_1}_{j_1,m_1}(\hatr)\otimes\sum_{m_2}\x^{(j_2,\ell_2)}_{2,m_2}\Y^{\ell_2,s_2}_{j_2,m_2}(\hatr)\right)^{s_3}\\
    & \qquad\qquad = 
    (-1)^{\ell_1}\sqrt{\frac{2j_1+1}{2s_1+1}}(-1)^{\ell_2}\sqrt{\frac{2j_2+1}{2s_2+1}}\sum_{j_3,\ell_3}\sqrt{(2s_1+1)(2s_2+1)(2j_3+1)(2\ell_3+1)}\begin{Bmatrix}
        j_1 & \ell_1 & s_1\\
        j_2 & \ell_2 & s_2\\
        j_3 & \ell_3 & s_3
    \end{Bmatrix}\\
    & \qquad\qquad\qquad\qquad\qquad\qquad\left\{\begin{gathered}\begin{tikzpicture}
    \node (x1) at (0, 2.5) {$\x^{(j_1,\ell_1)}_{1}$};
    \node (y1) at (0, 0.5) {$\x^{(j_2,\ell_2)}_{2}$};
    \coordinate (cg1) at (1, 1.5);
    \coordinate (s1) at (2, 1.5);
    \node (x2) at (0, -0.5) {$\Y^*_{\ell_1}(\hatr)$};
    \node (y2) at (0, -2.5) {$\Y^*_{\ell_2}(\hatr)$};
    \coordinate (cg2) at (1, -1.5);
    \coordinate (s2) at (2, -1.5);
    \coordinate (cg3) at (4,0);
    \coordinate (s3) at (5,0);
%
    \draw[arrow1b] (x1) -- (cg1) node[midway, above right] {\tiny $j_1$};
    \draw[arrow2b] (y1) -- (cg1) node[midway, below right] {\tiny $j_2$};
    \draw[arrow1f] (cg1) -- (s1) node[midway, above] {\tiny $j_3$};
    \draw[arrow1b] (x2) -- (cg2) node[midway, above right] {\tiny $\ell_1$};
    \draw[arrow2b] (y2) -- (cg2) node[midway, below right] {\tiny $\ell_2$};
    \draw[arrow1f] (cg2) -- (s2) node[midway, above] {\tiny $\ell_3$};
    \draw[arrow1] (s1) -- (cg3) node[midway, above right] {\tiny $j_3$};
    \draw[arrow2] (s2) -- (cg3) node[midway, below right] {\tiny $\ell_3$};
    \draw[arrow1] (cg3) -- (s3) node[midway, above] {\tiny $s_3$};
    \fill (cg1) circle (1.5pt);
    \fill (cg2) circle (1.5pt);
    \fill (cg3) circle (1.5pt);
\end{tikzpicture}
\end{gathered}\right\}.
\end{align*}
Applying Lemma~\ref{lemma:gaunt_diagram}, the lower branch becomes $\sqrt{\frac{(2\ell_1+1)(2\ell_2+1)}{4\pi(2\ell_3+1)}}C^{\ell_3,0}_{\ell_1,0,\ell_2,0}\Y^*_{\ell_3}(\hatr)$.

\begin{align*}
    & \left(\sum_{m_1}\x^{(j_1,\ell_1)}_{1,m_1}\Y^{\ell_1,s_1}_{j_1,m_1}(\hatr)\otimes\sum_{m_2}\x^{(j_2,\ell_2)}_{2,m_2}\Y^{\ell_2,s_2}_{j_2,m_2}(\hatr)\right)^{s_3}\\
    & \qquad\qquad = 
    (-1)^{\ell_1}\sqrt{\frac{2j_1+1}{2s_1+1}}(-1)^{\ell_2}\sqrt{\frac{2j_2+1}{2s_2+1}}\sum_{j_3,\ell_3}\sqrt{(2s_1+1)(2s_2+1)(2j_3+1)(2\ell_3+1)}\begin{Bmatrix}
        j_1 & \ell_1 & s_1\\
        j_2 & \ell_2 & s_2\\
        j_3 & \ell_3 & s_3
    \end{Bmatrix}\\
    & \qquad\qquad\qquad\qquad\qquad\qquad
    \left\{\begin{gathered}\begin{tikzpicture}
        \node (x1) at (0, 2.5) {$\x^{(j_1,\ell_1)}_{1}$};
        \node (y1) at (0, 0.5) {$\x^{(j_2,\ell_2)}_{2}$};
        \coordinate (cg1) at (1, 1.5);
        \coordinate (s1) at (2, 1.5);
        \node[left] (s2) at (3, -1.5) {$\sqrt{\frac{(2\ell_1+1)(2\ell_2+1)}{4\pi(2\ell_3+1)}}C^{\ell_3,0}_{\ell_1,0,\ell_2,0}\Y^*_{\ell_3}(\hatr)$};
        \coordinate (cg3) at (4,0);
        \coordinate (s3) at (5,0);
    %
        \draw[arrow1b] (x1) -- (cg1) node[midway, above right] {\tiny $j_1$};
        \draw[arrow2b] (y1) -- (cg1) node[midway, below right] {\tiny $j_2$};
        \draw[arrow1f] (cg1) -- (s1) node[midway, above] {\tiny $j_3$};
        \draw[arrow1] (s1) -- (cg3) node[midway, above right] {\tiny $j_3$};
        \draw[arrow2b] (s2) -- (cg3) node[midway, below right] {\tiny $\ell_3$};
        \draw[arrow1] (cg3) -- (s3) node[midway, above] {\tiny $s_3$};
        \fill (cg1) circle (1.5pt);
        \fill (cg3) circle (1.5pt);
    \end{tikzpicture}
    \end{gathered}\right\}\\
    & \qquad\qquad = 
    (-1)^{\ell_1}\sqrt{\frac{2j_1+1}{2s_1+1}}(-1)^{\ell_2}\sqrt{\frac{2j_2+1}{2s_2+1}}\sum_{j_3,\ell_3}\sqrt{(2s_1+1)(2s_2+1)(2j_3+1)(2\ell_3+1)}\begin{Bmatrix}
        j_1 & \ell_1 & s_1\\
        j_2 & \ell_2 & s_2\\
        j_3 & \ell_3 & s_3
    \end{Bmatrix}\\
    & \qquad\qquad\qquad
    \sqrt{\frac{(2\ell_1+1)(2\ell_2+1)}{4\pi(2\ell_3+1)}}C^{\ell_3,0}_{\ell_1,0,\ell_2,0}(-1)^{\ell_3}\sqrt{\frac{2s_3+1}{2j_3+1}}(-1)^{\ell_3}\sqrt{\frac{2j_3+1}{2s_3+1}}\left\{\begin{gathered}\begin{tikzpicture}
        \node (x1) at (0, 2.5) {$\x^{(j_1,\ell_1)}_{1}$};
        \node (y1) at (0, 0.5) {$\x^{(j_2,\ell_2)}_{2}$};
        \coordinate (cg1) at (1, 1.5);
        \coordinate (s1) at (2, 1.5);
        \node[left] (s2) at (0.62, -1) {$\Y^*_{\ell_3}(\hatr)$};
        \coordinate (cg3) at (3,0.5);
        \coordinate (s3) at (4,0.5);
    %
        \draw[arrow1b] (x1) -- (cg1) node[midway, above right] {\tiny $j_1$};
        \draw[arrow2b] (y1) -- (cg1) node[midway, below right] {\tiny $j_2$};
        \draw[arrow1f] (cg1) -- (s1) node[midway, above] {\tiny $j_3$};
        \draw[arrow1] (s1) -- (cg3) node[midway, above right] {\tiny $j_3$};
        \draw[arrow2b] (s2) -- (cg3) node[midway, below right] {\tiny $\ell_3$};
        \draw[arrow1] (cg3) -- (s3) node[midway, above] {\tiny $s_3$};
        \fill (cg1) circle (1.5pt);
        \fill (cg3) circle (1.5pt);
    \end{tikzpicture}
    \end{gathered}\right\}
\end{align*}
where we pulled out constant factors and inserted $1=(-1)^{\ell_3}\sqrt{\frac{2s_3+1}{2j_3+1}}(-1)^{\ell_3}\sqrt{\frac{2j_3+1}{2s_3+1}}$. Comparing the diagram with \autoref{eqn:flipped_TSH_diagram}, we see we have exactly a TSH coupling. This gives,
\begin{align*}
    & \left(\sum_{m_1}\x^{(j_1,\ell_1)}_{1,m_1}\Y^{\ell_1,s_1}_{j_1,m_1}(\hatr)\otimes\sum_{m_2}\x^{(j_2,\ell_2)}_{2,m_2}\Y^{\ell_2,s_2}_{j_2,m_2}(\hatr)\right)^{s_3}\\
    & \qquad\qquad = 
    (-1)^{\ell_1}\sqrt{\frac{2j_1+1}{2s_1+1}}(-1)^{\ell_2}\sqrt{\frac{2j_2+1}{2s_2+1}}\sum_{j_3,\ell_3}\sqrt{(2s_1+1)(2s_2+1)(2j_3+1)(2\ell_3+1)}\begin{Bmatrix}
        j_1 & \ell_1 & s_1\\
        j_2 & \ell_2 & s_2\\
        j_3 & \ell_3 & s_3
    \end{Bmatrix}\\
    & \qquad\qquad\qquad
    \sqrt{\frac{(2\ell_1+1)(2\ell_2+1)}{4\pi(2\ell_3+1)}}C^{\ell_3,0}_{\ell_1,0,\ell_2,0}(-1)^{\ell_3}\sqrt{\frac{2s_3+1}{2j_3+1}}\sum_{m_3}\left\{\begin{gathered}\begin{tikzpicture}
        \node (x1) at (0, 1) {$\x^{(j_1,\ell_1)}_{1}$};
        \node (y1) at (0, -1) {$\x^{(j_2,\ell_2)}_{2}$};
        \coordinate (cg1) at (1, 0);
        \coordinate (s1) at (2, 0);
    %
    %
        \draw[arrow1b] (x1) -- (cg1) node[midway, above right] {\tiny $j_1$};
        \draw[arrow2b] (y1) -- (cg1) node[midway, below right] {\tiny $j_2$};
        \draw[arrow1] (cg1) -- (s1) node[midway, above] {\tiny $j_3,m_3$};
        \fill (cg1) circle (1.5pt);
    \end{tikzpicture}
    \end{gathered}\right\}\Y^{\ell_3,s_3}_{j_3,m_3}(\hatr).
\end{align*}
We simplify the constants
\begin{align*}
    & \left(\sum_{m_1}\x^{(j_1,\ell_1)}_{1,m_1}\Y^{\ell_1,s_1}_{j_1,m_1}(\hatr)\otimes\sum_{m_2}\x^{(j_2,\ell_2)}_{2,m_2}\Y^{\ell_2,s_2}_{j_2,m_2}(\hatr)\right)^{s_3}\\
    & \qquad\qquad = 
    \sqrt{\frac{(2j_1+1)(2j_2+1)(2\ell_1+1)(2\ell_2+1)(2s_3+1)}{4\pi}}\\
    & \qquad\qquad\qquad\qquad
   \sum_{j_3,\ell_3}\begin{Bmatrix}
        j_1 & \ell_1 & s_1\\
        j_2 & \ell_2 & s_2\\
        j_3 & \ell_3 & s_3
    \end{Bmatrix}(-1)^{\ell_1+\ell_2+\ell_3}C^{\ell_3,0}_{\ell_1,0,\ell_2,0}\sum_{m_3}\left\{\begin{gathered}\begin{tikzpicture}
        \node (x1) at (0, 1) {$\x^{(j_1,\ell_1)}_{1}$};
        \node (y1) at (0, -1) {$\x^{(j_2,\ell_2)}_{2}$};
        \coordinate (cg1) at (1, 0);
        \coordinate (s1) at (2, 0);
    %
    %
        \draw[arrow1b] (x1) -- (cg1) node[midway, above right] {\tiny $j_1$};
        \draw[arrow2b] (y1) -- (cg1) node[midway, below right] {\tiny $j_2$};
        \draw[arrow1] (cg1) -- (s1) node[midway, above] {\tiny $j_3,m_3$};
        \fill (cg1) circle (1.5pt);
    \end{tikzpicture}
    \end{gathered}\right\}\Y^{\ell_3,s_3}_{j_3,m_3}(\hatr)\\
    & \qquad\qquad = 
    \sqrt{\frac{(2j_1+1)(2j_2+1)(2\ell_1+1)(2\ell_2+1)(2s_3+1)}{4\pi}}\\
    & \qquad\qquad\qquad\qquad
   \sum_{j_3,\ell_3}\begin{Bmatrix}
        j_1 & \ell_1 & s_1\\
        j_2 & \ell_2 & s_2\\
        j_3 & \ell_3 & s_3
    \end{Bmatrix}C^{\ell_3,0}_{\ell_1,0,\ell_2,0}\sum_{m_3}\sum_{m_1,m_2}C^{j_3,m_3}_{j_1,m_1,j_2,m_2}\x^{(j_1,\ell_1)}_{1,m_1}\x^{(j_2,\ell_2)}_{2,m_2}\Y^{\ell_3,s_3}_{j_3,m_3}(\hatr)\\
    & \qquad\qquad = 
    \sqrt{\frac{(2j_1+1)(2j_2+1)(2\ell_1+1)(2\ell_2+1)(2s_3+1)}{4\pi}}\\
    & \qquad\qquad\qquad\qquad
    \sum_{m_1,m_2}\x^{(j_1,\ell_1)}_{1,m_1}\x^{(j_2,\ell_2)}_{2,m_2}\sum_{j_3,\ell_3,m_3}\begin{Bmatrix}
        j_1 & \ell_1 & s_1\\
        j_2 & \ell_2 & s_2\\
        j_3 & \ell_3 & s_3
    \end{Bmatrix}C^{\ell_3,0}_{\ell_1,0,\ell_2,0}C^{j_3,m_3}_{j_1,m_1,j_2,m_2}\Y^{\ell_3,s_3}_{j_3,m_3}(\hatr)
\end{align*}
where we noted that $C^{\ell_3,0}_{\ell_1,0,\ell_2,0}$ enforces $\ell_1+\ell_2+\ell_3$ must be even so $(-1)^{\ell_1+\ell_2+\ell_3}$ vanishes. Finally by bilinearity, we can also pull out the summations and $\x^{(j_1,\ell_1)}_{1,m_1},\x^{(j_2,\ell_2)}_{2,m_2}$ in the first term. Hence, we obtain
\begin{align*}
    & \left(\sum_{m_1}\x^{(j_1,\ell_1)}_{1,m_1}\Y^{\ell_1,s_1}_{j_1,m_1}(\hatr)\otimes\sum_{m_2}\x^{(j_2,\ell_2)}_{2,m_2}\Y^{\ell_2,s_2}_{j_2,m_2}(\hatr)\right)^{s_3}\\
    & \qquad\qquad = \sum_{m_1,m_2}\x^{(j_1,\ell_1)}_{1,m_1}\x^{(j_2,\ell_2)}_{2,m_2}\left(\Y^{\ell_1,s_1}_{j_1,m_1}(\hatr)\otimes\Y^{\ell_2,s_2}_{j_2,m_2}(\hatr)\right)^{s_3}
    \\
    & \qquad\qquad = 
    \sqrt{\frac{(2j_1+1)(2j_2+1)(2\ell_1+1)(2\ell_2+1)(2s_3+1)}{4\pi}}\\
    & \qquad\qquad\qquad\qquad
    \sum_{m_1,m_2}\x^{(j_1,\ell_1)}_{1,m_1}\x^{(j_2,\ell_2)}_{2,m_2}\sum_{j_3,\ell_3,m_3}\begin{Bmatrix}
        j_1 & \ell_1 & s_1\\
        j_2 & \ell_2 & s_2\\
        j_3 & \ell_3 & s_3
    \end{Bmatrix}C^{\ell_3,0}_{\ell_1,0,\ell_2,0}C^{j_3,m_3}_{j_1,m_1,j_2,m_2}\Y^{\ell_3,s_3}_{j_3,m_3}(\hatr).
\end{align*}
Since this must hold for all values of $\x^{(j_1,\ell_1)}_{1,m_1},\x^{(j_2,\ell_2)}_{2,m_2}$, this implies
\begin{align*}
    & \left(\Y^{\ell_1,s_1}_{j_1,m_1}(\hatr)\otimes\Y^{\ell_2,s_2}_{j_2,m_2}(\hatr)\right)^{s_3}\\
    & \qquad = 
    \sqrt{\frac{(2j_1+1)(2j_2+1)(2\ell_1+1)(2\ell_2+1)(2s_3+1)}{4\pi}}
    \sum_{j_3,\ell_3,m_3}\begin{Bmatrix}
        j_1 & \ell_1 & s_1\\
        j_2 & \ell_2 & s_2\\
        j_3 & \ell_3 & s_3
    \end{Bmatrix}C^{\ell_3,0}_{\ell_1,0,\ell_2,0}C^{j_3,m_3}_{j_1,m_1,j_2,m_2}\Y^{\ell_3,s_3}_{j_3,m_3}(\hatr).
\end{align*}

\end{proof}

\subsection{Proof of Theorem~\ref{thm:VSH_selection}}
\label{sec:VSH_selection}
\begin{proof}
    \autoref{thm:general_gaunt} tells us that
    \begin{align*}
        (\Y^{j_1,m_1}_{\ell_1,1} \otimes & \Y^{j_2,m_2}_{\ell_2,1})_{1}\\
        & \quad = \sqrt{\frac{(2j_1+1)(2j_2+1)(2\ell_1+1)(2\ell_2+1)3}{4\pi}}\sum_{j_3,\ell_3,m_3}\begin{Bmatrix}
            j_1 & \ell_1 & 1\\
            j_2 & \ell_2 & 1\\
            j_3 & \ell_3 & 1\\
        \end{Bmatrix}C^{\ell,0}_{\ell_1,0,\ell_2,0}C^{j_3,m_3}_{j_1,m_1,j_2,m_2}\Y^{j_3,m_3}_{\ell_3,1}.
    \end{align*}

    We first work through the only if direction.
    
    Rules 1-3 follow from the fact that the triangle condition must be satisfied in the rows and columns of the Wigner 9j symbol. Rule 4 follows from the even selection rule of the $C^{\ell_3,0}_{\ell_1,0,\ell_2,0}$ coefficient.
    
    For selection rule 5, we use the symmetry properties of the Wigner 9j symbols. Suppose there exists a permutation $a,b,c$ such that $j_a=\ell_a$ and $(j_b,\ell_b)=(j_c,\ell_c)$. Suppose we swap rows $b,c$, this is an odd permutation so the symmetries of the 9j symbol means we pick up a phase factor $(-1)^S$ where $S=\sum_{i=1}^3(j_i+\ell_i+1)$. Note that $S$ is odd because each $j_b+\ell_b=j_c+\ell_c$ and $j_a=\ell_a$. Hence our phase factor is $-1$. But swapping $b,c$ does not change the 9j symbol since $j_b=j_c$. Hence by symmetry the 9j symbol must vanish, giving us selection rule 5.

    To check the if direction, we just need to check that the Wigner 9j and the $C^{\ell_3,0}_{\ell_1,0,\ell_2,0}$ are nonzero when the rules are satisfied. First, it is known that $C^{\ell,0}_{\ell_1,0,\ell_2,0}$ is nonzero if and only if it satisfies rule 4 \cite{raynal1978definition}. For the Wigner 9j symbols, we use the explicit formulas documented in \citet{varshalovich1988quantum}. These are listed in tables where the Wigner 9j takes the following form
    \[
        \begin{Bmatrix}
            a+\lambda & a & 1\\
            b+\mu & b & 1\\
            c+\nu & c & 1
        \end{Bmatrix}.
    \]
    The modified versions of rules 1-3 and rule 5 with this change of variables is as follows
    \begin{enumerate}
        \item $\{a+\lambda,a,1\}=\{b+\mu,b,1\}=\{c+\nu,c,1\}=1$
        \item $\{a+\lambda,b+\mu,c+\nu\}=1$
        \item $\{a,b,c\}=1$
        \setcounter{enumi}{4}
        \item \begin{enumerate}
            \item We do not have $\lambda=0$ and $b=c$ and $\mu=\nu$
            \item We do not have $\mu=0$ and $a=c$ and $\lambda=\nu$
            \item We do not have $\nu=0$ and $a=b$ and $\lambda=\mu$
        \end{enumerate}
    \end{enumerate}

    We now work through the formula tables by casework on $\lambda,\mu,\nu$. For brevity, we introduce $s=a+b+c$. All formulas consist of products linear or factorial terms in the numerator and denominator. We simply inspect the linear terms in the numerator and check which terms potentially can be 0. We also check the denominator factorials which can potentially become infinite for negative integers (assuming extension to gamma functions). We then check which rules the problematic terms would break.
    
    We list this casework as a series of 3 tables separated by values of $\nu$. First two columns indicate the value of $\lambda,\mu$ considered. Then we list the formula from \citet{varshalovich1988quantum}. Then we list the problematic terms. Finally, we list the rules the problem terms would break, respectively.
    
    For $\nu = 1$:

    \begin{center}
    \scalebox{0.7}{
        \begin{tabular}{|c|c|c|c|c|}
            \hline
            $\lambda$ & $\mu$ & Formula & 0 terms & Rules broken\\
            \hline
            1 & 1 & $\displaystyle -\left[\frac{(s+4)!(s-2c+1)(s-2b+1)(s-2a+1)(2a)!(2b)!(2c)!}{3(s+1)!(2a+3)!(2b+3)!(2c+3)!}\right]^{1/2}$ & $(s-2c+1),(s-2b+1),(s-2a+1)$ & 3, 3, 3\\
            \hline
            1 & 0 & $\displaystyle (c-a)\left[\frac{2(s+3)!(s-2b+2)(2a)!(2b-1)!(2c)!}{3(s+1)!(s-2b)!(2a+3)!(2b+2)!(2c+3)!}\right]^{1/2}$ & $(c-a),(s-2b+2),\frac{1}{(s-2b)!}$ & 5b, 3, 3\\
            \hline
            1 & -1 & $\displaystyle -\left[\frac{(s+2)(s-2c)(s-2b+3)!(s-2a)(2a)!(2b-2)!(2c)!}{3(s-2b)!(2a+3)!(2b+1)!(2c+3)!}\right]^{1/2}$ & $(s-2c),(s-2a),\frac{1}{(s-2b)!}$ & 2, 2, 3 \\
            \hline
            0 & 1 & $\displaystyle (b-c)\left[\frac{2(s+3)!(s-2a+2)(2a-1)!(2b)!(2c)!}{3(s+1)!(s-2a)!(2a+2)!(2b+3)!(2c+3)!}\right]^{1/2}$ & $(b-c), (s-2a+2), \frac{1}{(s-2a)!}$ & 5a, 3, 2 \\
            \hline
            0 & 0 & $\displaystyle 2(c+1)\left[\frac{(s+2)(s-2c)(s-2b+1)(s-2a+1)(2a-1)!(2b-1)!(2c)!}{3(2a+2)!(2b+2)!(2c+3)!}\right]^{1/2}$ & $(s-2c), (s-2b+1), (s-2a+1)$ & 2, 3, 3 \\
            \hline
            0 & -1 & $\displaystyle -(c+b+1)\left[\frac{2(s-2c)!(s-2b+2)!(2a-1)!(2b-2)!(2c)!}{3(s-2c-2)!(s-2b)!(2a+2)!(2b+1)!(2c+3)!}\right]^{1/2}$ & $\frac{1}{(s-2c-2)!}, \frac{1}{(s-2b)!}$ & 2, 3 \\
            \hline
            -1 & 1 & $\displaystyle -\left[\frac{(s+2)(s-2c)(s-2b)(s-2a+3)!(2a-2)!(2b)!(2c)!}{3(s-2a)!(2a+1)!(2b+3)!(2c+3)!}\right]^{1/2}$ & $(s-2c), (s-2b), \frac{1}{(s-2a)!}$ & 2, 2, 3 \\
            \hline
            -1 & 0 & $\displaystyle (a+c+1)\left[\frac{2(s-2c)!(s-2a+2)!(2a-2)!(2b-1)!(2c)!}{3(s-2c-2)!(s-2a)!(2a+1)!(2b+2)!(2c+3)!}\right]^{1/2}$ & $\frac{1}{(s-2c-2)!}, \frac{1}{(s-2a)!}$ & 2, 3 \\
            \hline
            -1 & -1 & $\displaystyle -\left[\frac{(s+1)(s-2c)!(s-2b+1)(s-2a+1)(2a-2)!(2b-2)!(2c)!}{3(s-2c-3)!(2a+1)!(2b+1)!(2c+3)!}\right]^{1/2}$ & $(s-2b+1), (s-2a+1), \frac{1}{(s-2c-3)!}$ & 3, 3, 2 \\
            \hline
        \end{tabular}}
    \end{center}
    
    For $\nu = 0$:
    
    
    \begin{center}
    \scalebox{0.7}{
        \begin{tabular}{|c|c|c|c|c|}
            \hline
            $\lambda$ & $\mu$ & Formula & 0 terms & Rules broken\\
            \hline
            1 & 1 & $\displaystyle (a-b)\left[\frac{2(s+3)!(s-2c+2)(2a)!(2b)!(2c-1)!}{3(s+1)!(s-2c)!(2a+3)!(2b+3)!(2c+2)!}\right]^{1/2}$ & $(a-b), (s-2c+2), \frac{1}{(s-2c)!}$ & 5c, 3, 3 \\
            \hline
            1 & 0 & $\displaystyle 2(a+1)\left[\frac{(s+2)(s-2c+1)(s-2b+1)(s-2a)(2a)!(2b-1)!(2c-1)!}{3(2a+3)!(2b+2)!(2c+2)!}\right]^{1/2}$ & $(s-2c+1), (s-2b+1), (s-2a)$ & 3, 3, 2 \\
            \hline
            1 & -1 & $\displaystyle (a+b+1)\left[\frac{2(s-2b+2)!(s-2a)!(2a)!(2b-2)!(2c-1)!}{3(s-2b)!(s-2a-2)!(2a+3)!(2b+1)!(2c+2)!}\right]^{1/2}$ & $\frac{1}{(s-2b)!}, \frac{1}{(s-2a-2)!}$ & 3, 2 \\
            \hline
            0 & 1 & $\displaystyle 2(b+1)\left[\frac{(s+2)(s-2c+1)(s-2b)(s-2a+1)(2a-1)!(2b)!(2c-1)!}{3(2a+2)!(2b+3)!(2c+2)!}\right]^{1/2}$ & $(s-2c+1), (s-2b), (s-2a+1)$ & 3, 2, 3 \\
            \hline
            0 & 0 & $\displaystyle 0$ & Entire expression & 5abc \\
            \hline
            0 & -1 & $\displaystyle 2b\left[\frac{(s+1)(s-2c)(s-2b+1)(s-2a)(2a-1)!(2b-2)!(2c-1)!}{3(2a+2)!(2b+1)!(2c+2)!}\right]^{1/2}$ & $(s-2c), (s-2b+1), (s-2a)$ & 2, 3, 2 \\
            \hline
            -1 & 1 & $\displaystyle -(a+b+1)\left[\frac{2(s-2b)!(s-2a+2)!(2a-2)! (2b)!(2c-1)!}{3(s-2b-2)!(s-2a)!(2a+1)!(2b+3)!(2c+2)!}\right]^{1/2}$ & $\frac{1}{(s-2b-2)!}, \frac{1}{(s-2a)!}$ & 2, 3 \\
            \hline
            -1 & 0 & $\displaystyle 2a\left[\frac{(s+1)(s-2c)(s-2b)(s-2a+1)(2a-2)!(2b-1)!(2c-1)!}{3(2a+1)!(2b+2)!(2c+2)!}\right]^{1/2}$ & $(s-2c), (s-2b), (s-2a+1)$ & 2, 2, 3 \\
            \hline
            -1 & -1 & $\displaystyle (b-a)\left[\frac{2(s+1)!(s-2c)!(2a-2)!(2b-2)!(2c-1)!}{3(s-1)!(s-2c-2)!(2a+1)!(2b+1)!(2c+2)!}\right]^{1/2}$ & $(b-a), \frac{1}{(s-1)!}, \frac{1}{(s-2c-2)!}$ & 5c, 1, 2 \\
            \hline
        \end{tabular}}
\end{center}
    \newpage
    
    For $\nu = -1$:
    

    \begin{center}
    \scalebox{0.7}{
        \begin{tabular}{|c|c|c|c|c|}
            \hline
            $\lambda$ & $\mu$ & Formula & 0 terms & Rules broken\\
            \hline
            1 & 1 & $\displaystyle -\left[\frac{(s+2)(s-2c+3)!(s-2b)(s-2a)(2a)!(2b)!(2c-2)!}{3(s-2c)!(2a+3)!(2b+3)!(2c+1)!}\right]^{1/2}$ & $(s-2b), (s-2a), \frac{1}{(s-2c)!}$ & 2, 2, 3 \\
            \hline
            1 & 0 & $\displaystyle -(a+c+1)\left[\frac{2(s-2c+2)!(s-2a)!(2a)!(2b-1)!(2c-2)!}{3(s-2c)!(s-2a-2)!(2a+3)!(2b+2)!(2c+1)!}\right]^{1/2}$ & $\frac{1}{(s-2c)!}, \frac{1}{(s-2a-2)!}$ & 3, 2 \\
            \hline
            1 & -1 & $\displaystyle -\left[\frac{(s+1)(s-2c+1)(s-2b+1)(s-2a)!(2a)!(2b-2)!(2c-2)!}{3(s-2a-3)!(2a+3)!(2b+1)!(2c+1)!}\right]^{1/2}$ & $(s-2c+1), (s-2b+1), \frac{1}{(s-2a-3)!}$ & 3, 3, 2 \\
            \hline
            0 & 1 & $\displaystyle (b+c+1)\left[\frac{2(s-2c+2)!(s-2b)!(2a-1)!(2b)!(2c-2)!}{3(s-2c)!(s-2b-2)!(2a+2)!(2b+3)!(2c+1)!}\right]^{1/2}$ & $\frac{1}{(s-2c)!}, \frac{1}{(s-2b-2)!}$ & 3, 2 \\
            \hline
            0 & 0 & $\displaystyle 2c\left[\frac{(s+1)(s-2c+1)(s-2b)(s-2a)(2a-1)!(2b-1)!(2c-2)!}{(2a+2)!(2b+2)!(2c+1)!}\right]^{1/2}$ & $(s-2c+1), (s-2b), (s-2a)$ & 3, 2, 2 \\
            \hline
            0 & -1 & $\displaystyle (c-b)\left[\frac{2(s+1)!(s-2a)!(2a-1)!(2b-2)!(2c-2)!}{3(s-1)!(s-2a-2)!(2a+2)!(2b+1)!(2c+1)!}\right]^{1/2}$ & $(c-b), \frac{1}{(s-1)!}, \frac{1}{(s-2a-2)!}$ & 5a, 1, 2 \\
            \hline
            -1 & 1 & $\displaystyle -\left[\frac{(s+1)(s-2c+1)(s-2b)!(s-2a+1)(2a-2)!(2b)!(2c-2)!}{3(s-2b-3)!(2a+1)!(2b+3)!(2c+1)!}\right]^{1/2}$ & $(s-2c+1), (s-2a+1), \frac{1}{(s-2b-3)!}$ & 3, 3, 2 \\
            \hline
            -1 & 0 & $\displaystyle (a-c)\left[\frac{2(s+1)!(s-2b)!(2a-2)!(2b-1)!(2c-2)!}{3(s-1)!(s-2b-2)!(2a+1)!(2b+2)!(2c+1)!}\right]^{1/2}$ & $(a-c), \frac{1}{(s-1)!}, \frac{1}{(s-2b-2)!}$ & 5b, 1, 2 \\
            \hline
            -1 & -1 & $\displaystyle -\left[\frac{(s+1)!(s-2c)(s-2b)(s-2a)(2a-2)!(2b-2)!(2c-2)!}{3(s-2)!(2a+1)!(2b+1)!(2c+1)!}\right]^{1/2}$ & $(s-2c), (s-2b), (s-2a), \frac{1}{(s-2)!}$ & 2, 2, 2, 1 \\
            \hline
        \end{tabular}}
\end{center}
From this careful casework, we see the only way for the Wigner 9js to be 0 are if they break the selection rules 1-3 or 5. Hence, we conclude it is nonzero if and only if selection rules 1-3 and 5 are satisfied.

Combining, with the fact that $C^{\ell_3,0}_{\ell_1,0,\ell_2,0}$ is nonzero if and only if rule 4 is satisfied, we conclude the VSTP interaction is nonzero if and only if all the selection rules 1-5 are satisfied.

\end{proof}

\subsection{Proof of Theorem~\ref{thm:VSH_selection_limits}}
\label{sec:VSH_selection_limits}
\begin{proof}
    We already satisfy condition $2$. Without loss of generality, assume $j_1\leq j_2\leq j_3$
    
    \textit{Case 1:} Suppose $j_1,j_2,j_3$ are distinct. Condition $5$ is already satisfied since the $j$'s are unique. Since the $j$'s are distinct integers, we have $j_1+1\leq j_2\leq j_3-1$. If $j_1+j_2+j_3$ is even, we can set $\ell_i=j_i$ so conditions $1,3,4$ are clearly satisfied. If $j_1+j_2+j_3$ is odd, we can set $\ell_1=j_1,\ell_2=j_2,\ell_3=j-3$. By construction $1,4$ are satisfied. For $3$, we have 
    \[\ell_3<j_3\leq j_1+j_2=\ell_1+\ell_2\]
    \[\ell_2=j_2\leq j_3-1=\ell_3\leq \ell_3+\ell_1\]
    \[\ell_1=j_1\leq j_3-1=\ell_3\leq \ell_3+\ell_2.\]
    Hence we can always choose $\ell_1,\ell_2,\ell_3$ which satisfy the selection rules.

    \textit{Case 2:} Suppose two of $j_1,j_2,j_3$ are equal. Then we have two subcases.
    
    \textit{Subcase 1:} $j_1=j_2\leq j_3-1$. Note that $j_3-1\geq0$ so $j_3\geq 1$. But $1\leq j_3\leq j_1+j_2=2j_1$ so $j_1\geq1$ since $j_1$ is an integer.
    
    If $j_1+j_2+j_3$ is even, we can set $\ell_1=j_1$, $\ell_2=j_2+1$, and $\ell_3=j_3-1$. By construction we satisfy $1,4$. Since $\ell_1\neq\ell_2$ and $j_1=j_2\neq j_3$ we will satisfy $5$. For $3$ we find that
    \[\ell_1=j_1=j_2<j_2+1=\ell_2\leq \ell_2+\ell_3\]
    \[\ell_2=j_2+1\leq j_3=\ell_3+1\leq \ell_1+\ell_3\]
    \[\ell_3=j_3-1<j_1+j_2=\ell_1+\ell_2-1<\ell_1+\ell_2.\]
    Hence we satisfy all the selection rules.

    Suppose $j_1+j_2+j_3$ is odd. Then we can set $\ell_1=j_1,\ell_2=j_2+1$, and $\ell_3=j_3$. By construction we satisfy $1,4$. Since $\ell_1\neq\ell_2$ and $j_1=j_2\neq j_3$ we will satisfy $5$. For $3$ we find that
    \[\ell_1=j_1=j_2<j_2+1=\ell_2\leq \ell_2+\ell_3\]
    \[\ell_2=j_2+1\leq j_3=\ell_3\leq \ell_1+\ell_3\]
    \[\ell_3=j_3\leq j_1+j_2=\ell_1+\ell_2-1<\ell_1+\ell_2.\]
    Hence we satisfy all the selection rules.

    \textit{Subcase 2:} $j_1+1\leq j_2=j_3$. If $j_1+j_2+j_3$ is even, we can set $\ell_1=j_1+1$, $\ell_2=j_2$, and $\ell_3=j_3-1$. By construction we satisfy $1,4$. Since $\ell_2\neq\ell_3$ and $j_2=j_3\neq j_1$, we satisfy $5$. For $3$ we find
    \[\ell_1=j_1+1\leq j_2=\ell_2\leq \ell_2+\ell_3\]
    \[\ell_2=j_2\leq j_1+j_3=(j_1+1)+(j_3-1)=\ell_1+\ell_3\]
    \[\ell_3=j_3-1\leq j_1+j_2-1<j_1+1+j_2=\ell_1+\ell_2.\]
    Hence we satisfy all the selection rules.

    If $j_1+j_2+j_3$ is odd, we can set $\ell_1=j_1+1$, $\ell_2=j_2$, and $\ell_3=j_3$. By construction we satisfy $1,4$. Since $j_1\neq \ell_1$ and $j_2=j_3\neq j_1$, we satisfy $5$. For $3$ we find
    \[\ell_1=j_1+1\leq j_3=\ell_3\leq\ell_2+\ell_3\]
    \[\ell_2=j_2\leq j_1+j_3<(j_1+1)+j_3=\ell_1+\ell_3\]
    \[\ell_3=j_3\leq j_1+j_2<(j_1+1)+j_2=\ell_1+\ell_2.\]
    Hence we satisfy all the selection rules.

    \textit{Case 3:} Suppose $j_1=j_2=j_3=j$. If $j>0$ and is even, then $j\geq2$. We can pick $\ell_1=j-1,\ell_2=j,\ell_3=j+1$. By construction we satisfy $1,4$. Since the $\ell$'s are distinct we also satisfy $5$. For $3$ we check that
    \[\ell_1=j-1<j+j+1=\ell_2+\ell_3\]
    \[\ell_2=j<j-1+j+1=\ell_1+\ell_3\]
    \[\ell_3=j+1=j-1+2\leq j-1+j=\ell_1+\ell_2.\]
    Hence we satisfy all the selection rules.

    If $j$ is odd then $j\geq 1$. We can pick $\ell_1=j-1,\ell_2=j,\ell_3=j$. By construction we satisfy $1,4$. Since $\ell_1\neq j$ and $\ell_2=\ell_3\neq\ell_1$ we also satisfy $5$. For $3$ we check that
    \[\ell_1=j-1<j+j=\ell_2+\ell_3\]
    \[\ell_2=j\leq j-1+j=\ell_1+\ell_3\]
    \[\ell_3=j\leq j-1+j=\ell_1+\ell_2.\]
    Hence we satisfy all the selection rules.

    The only case which fails is $j_1=j_2=j_3=0$ in which case selection rule $1$ forces $\ell_1=\ell_2=\ell_3=1$ which breaks rule $4$. However, this case just correspond to multiplication of scalars which is trivial.
\end{proof}


    

\end{document}